\documentclass{article} 
\usepackage{iclr2026_conference,times}

\usepackage{amsmath,amsfonts,bm}

\def\eqref#1{equation~\ref{#1}}

\def\plaineqref#1{\ref{#1}}

\def\1{\bm{1}}

\DeclareMathAlphabet{\mathsfit}{\encodingdefault}{\sfdefault}{m}{sl}
\SetMathAlphabet{\mathsfit}{bold}{\encodingdefault}{\sfdefault}{bx}{n}

\usepackage{hyperref}
\usepackage{url}
\usepackage{graphicx}
\usepackage{wrapfig}
\usepackage{subcaption}
\usepackage{amssymb}
\usepackage{amsthm}
\usepackage{booktabs}
\usepackage{makecell}
\usepackage{enumitem}
\usepackage[most]{tcolorbox}
\usepackage{bbding}
\usepackage{colortbl}
\usepackage{algpseudocode}
\makeatletter
\newcommand{\AlgCenter}[1]{\Statex \hspace*{-\ALG@tlm}\parbox{\dimexpr\linewidth+\ALG@tlm\relax}{\centering #1\par}}
\makeatother

\newcommand{\echotitleicon}{%
  \raisebox{-0.4em}{%
    \includegraphics[height=2em,trim=100 58 101 168,clip]{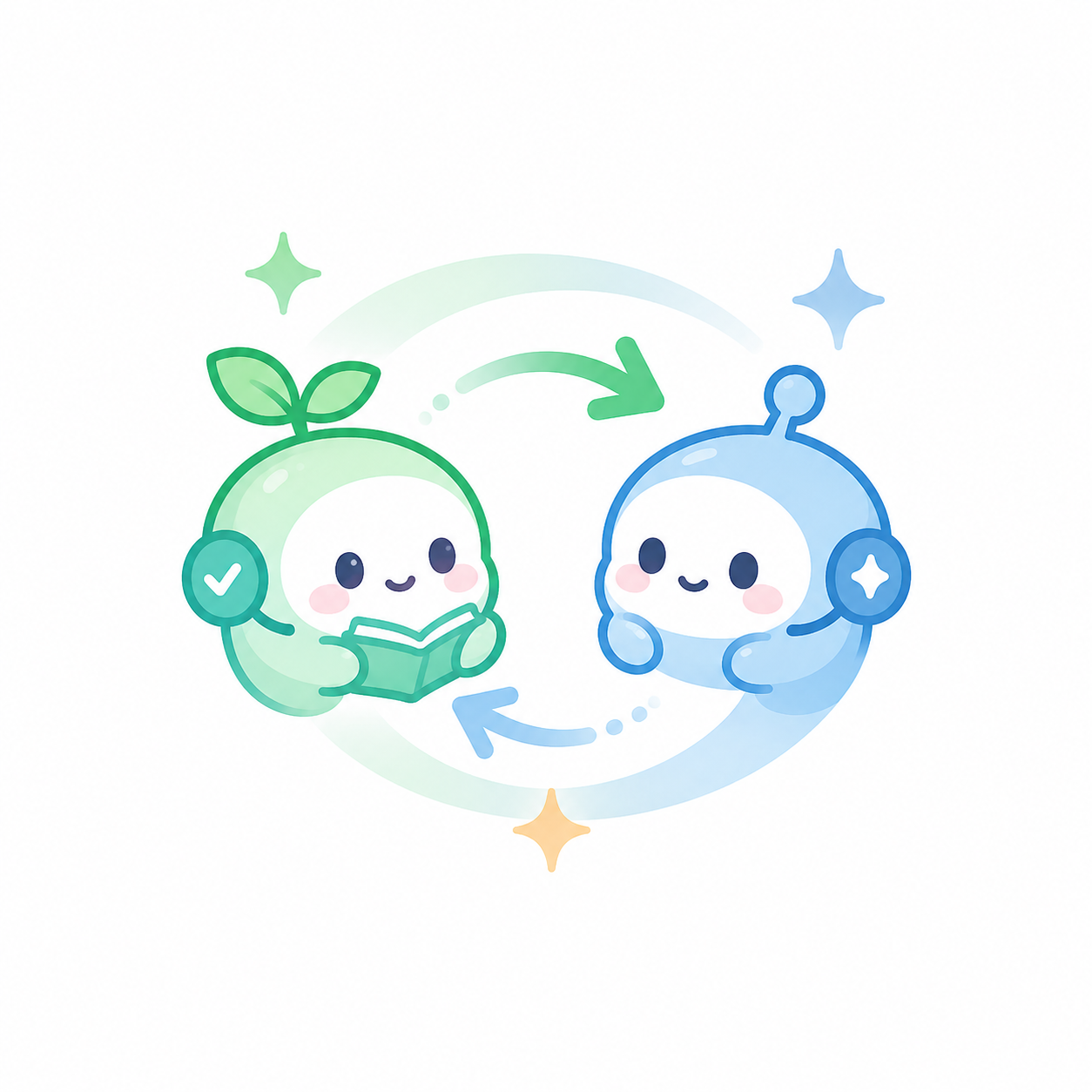}%
  }%
  \ %
}
\newcommand{\ucobletter}[1]{\underline{#1}}
\definecolor{scigreen}{RGB}{0, 128, 96}
\definecolor{scired}{RGB}{190, 80, 70}
\definecolor{sciblue}{RGB}{55, 105, 180}
\definecolor{ucobblue}{RGB}{236, 245, 255}
\definecolor{succyellow}{RGB}{255, 253, 238}
\definecolor{ucobmetric}{RGB}{242, 249, 238}
\definecolor{groupgray}{RGB}{242, 242, 242}
\newcolumntype{Y}{>{\columncolor{succyellow}}c}
\newcommand{\cmark}{\textcolor{scigreen}{\checkmark}}
\newcommand{\xmark}{\textcolor{scired}{\(\times\)}}
\newcommand{\pmark}{\textcolor{sciblue}{\(\triangle\)}}
\newcounter{ucobalgorithm}
\newenvironment{ucobalgorithm}[1]{%
  \refstepcounter{ucobalgorithm}%
  \par\vspace{0.5em}\noindent
  \begin{minipage}{\textwidth}
  \hrule\vspace{0.45em}
  \noindent\textbf{Algorithm~\theucobalgorithm. #1}
  \vspace{0.45em}\hrule\vspace{0.35em}
}{%
  \vspace{0.2em}\hrule
  \end{minipage}
  \par\vspace{0.7em}
}
\newtheorem{proposition}{Proposition}
\newtheorem{corollary}{Corollary}

\title{\texorpdfstring{\hyphenpenalty=10000\exhyphenpenalty=10000\echotitleicon\textbf{UCOB}: Learning to \ucobletter{U}tilize and Evolve Agentic Skills via \ucobletter{C}redit-Aware \ucobletter{O}n-Policy \ucobletter{B}idirectional Self-Distillation}{UCOB: Learning to Utilize and Evolve Agentic Skills via Credit-Aware On-Policy Bidirectional Self-Distillation}}

\author{%
   \textbf{Songjun Tu$^\spadesuit$}, 
   \textbf{Chengdong Xu$^\clubsuit$}, 
   \textbf{Qichao Zhang$^\spadesuit$\Envelope}, 
   \textbf{Yiwen Ma$^\spadesuit$},
   \textbf{Yaocheng Zhang$^\spadesuit$}, \\
   \textbf{Linjing Li$^\spadesuit$}, 
   \textbf{Dong Li$^\diamondsuit$},
   \textbf{Xiangyuan Lan$^\clubsuit$},
   \textbf{Dongbin Zhao$^\spadesuit$}\\
   $\spadesuit$ Institute of Automation, Chinese Academy of Sciences\\
   $\clubsuit$ Pengcheng Laboratory \quad
   $\diamondsuit$ Memorax AI\\
   \texttt{\{tusongjun2023,zhangqichao2014\}@ia.ac.cn}\\
   \underline{\textbf{\textit{June 28, 2026}}}
}

\iclrfinalcopy 
\begin{document}

\maketitle
\vspace{-1em}

\begin{abstract}
Skills can improve agentic reinforcement learning by reusing past experience as textual guidance, but retrieved skills are not always reliable: they may help in one state while misleading in another.
This makes the common privileged-teacher assumption fragile, namely that a skill-conditioned prompt can be treated as a fixed teacher for the no-skill prompt.
We propose \textsc{UCOB}, a framework that first learns when to utilize agentic skills via credit-aware on-policy bidirectional self-distillation, then evolves them through utility-aware skill updates and reflection self-training.
\textsc{UCOB} constructs two on-policy context views from skill-conditioned and no-skill prompts, compares return-to-go among rollouts sharing the same task and anchor state, and uses the higher-return view as the local teacher.
The resulting local credit signal internalizes useful skill-conditioned behavior, corrects misleading skill usage, and further updates the dual-granularity skill memory, informs utility-aware retrieval, and supports reflection self-training.
Experiments on \textsc{ALFWorld}, \textsc{WebShop}, and \textsc{Search-QA} show that \textsc{UCOB} outperforms skill-free RL baselines, skill-augmented methods, and self-distillation baselines across model scales, achieving up to \textbf{23.5} and \textbf{18.0} absolute success-rate gains over SOTA baselines on \textsc{ALFWorld} and \textsc{WebShop}, respectively.
Ablations and analyses further validate its core mechanisms, continual adaptation across environments, and modest training overhead.
\begin{center}
Code is available at \url{https://github.com/TU2021/UCOB}.
\end{center}
\end{abstract}

\section{Introduction}

\begin{figure}[t]
    \centering
    \includegraphics[width=\textwidth]{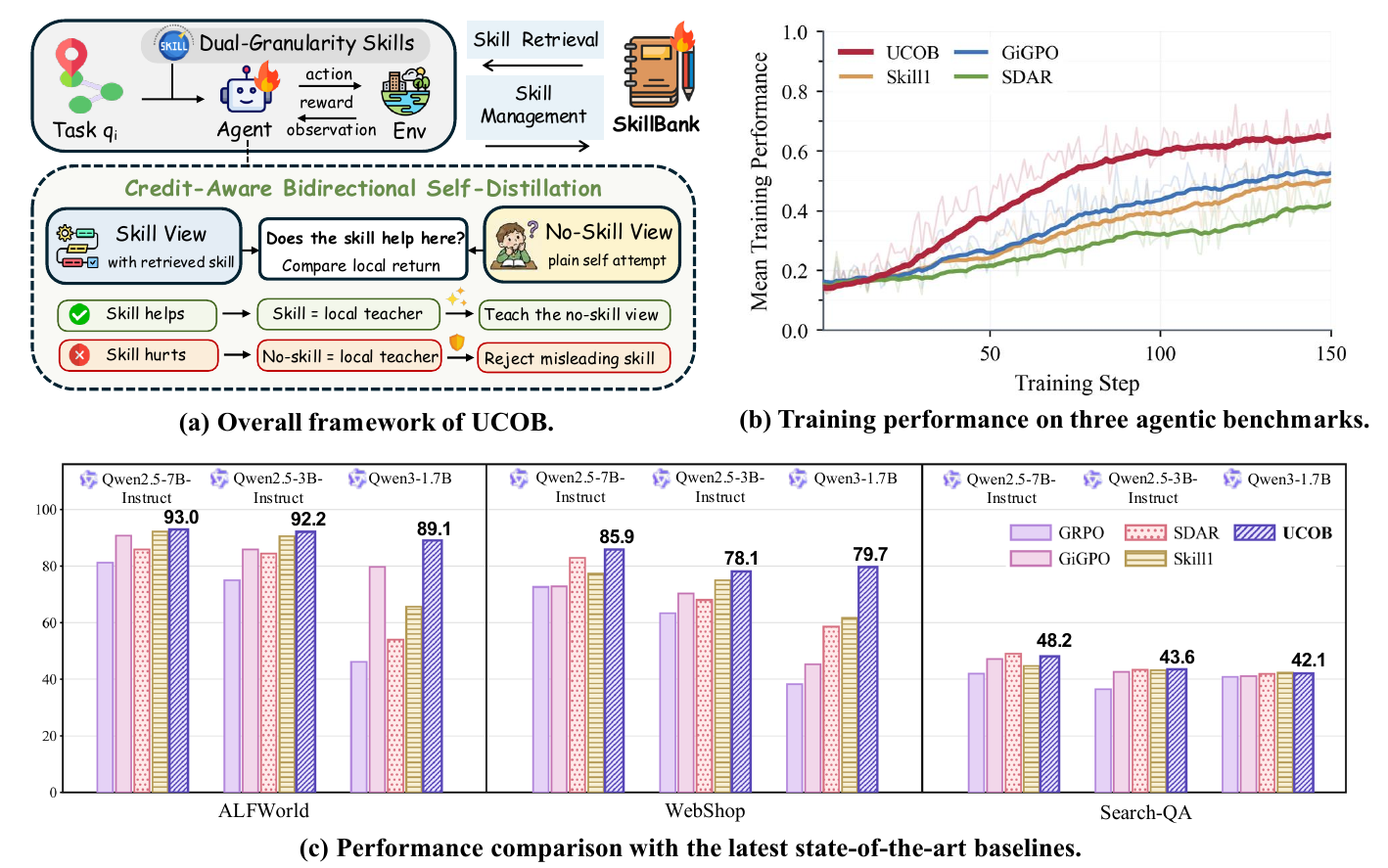}
    \caption{Overview and empirical summary of \textsc{UCOB}.}
    \label{fig:overall}
    \vspace{-1em}
\end{figure}

Agentic reinforcement learning (RL) has become a central paradigm for improving language agents in long-horizon interactive tasks.
Unlike single-turn reasoning, these environments require agents to act under partial observations, recover from local mistakes, and assign sparse outcome rewards to many intermediate decisions.
A natural way to reduce this difficulty is to reuse past experience: agents can reflect on trajectories, store reusable textual skills, retrieve relevant skills in later states, and condition future actions on this external memory~\citep{shinn2023reflexionlanguageagentsverbal,wang2023voyageropenendedembodiedagent}.
Recent skill-augmented agent methods show that such memory can improve exploration and sample efficiency~\citep{tu2026dynamicdualgranularityskillbank,shi2026skill1unifiedevolutionskillaugmented}.
When these retrieved skills are further used for training, however, a stronger assumption often enters implicitly: the skill-conditioned behavior is treated as a better source of supervision.
\textbf{\textit{Can skill-conditioned context be safely treated as a privileged teacher during agent training?}}

This assumption is appealing but fragile.
Retrieved skills may be irrelevant, stale, overly generic, or locally mismatched to the current state, and the agent may fail to ground even useful skills in the present observation.
Consequently, a skill-conditioned prompt can help in one state but mislead the same policy in another.
Prior methods improve skill construction, curation, or skill-conditioned self-distillation~\citep{ouyang2026skilloslearningskillcuration,wang2026skillsdskillconditionedselfdistillation,lu2026selfdistilledagenticreinforcementlearning}, but the supervision direction often remains fixed from the skill-conditioned view to the no-skill view.
Our diagnostics in Section~\ref{sec:observations} challenge this fixed-teacher view: the skill-conditioned branch is not consistently better than the no-skill branch, and making skill-conditioned rollouts on-policy does not remove this ambiguity.
\textbf{\textit{When skill and no-skill views disagree at the same state, which view should teach the other?}}

To address this, we propose \textsc{UCOB} (learning to \textbf{U}tilize and evolve agentic skills via \textbf{C}redit-aware \textbf{O}n-policy \textbf{B}idirectional self-distillation), which treats the skill-conditioned prompt and the no-skill prompt as two on-policy context views of the same online model, rather than as a fixed teacher-student pair.
The phrase \emph{utilize and evolve} reflects a sequential loop: \textsc{UCOB} first decides whether retrieved skills should be trusted or corrected, then uses the resulting credit evidence to update the skill memory and skill writer.
During training, \textsc{UCOB} compares skill/no-skill rollouts within the same task-anchor group and lets the higher-return view teach the other.
Thus, useful skill-conditioned behavior is internalized, while misleading skill-induced behavior is corrected by the no-skill view.
The same local credit also updates the dual-granularity skill memory, guides utility-aware retrieval, and trains the reflection-based skill writer, closing the loop between policy learning and skill evolution.
Figure~\ref{fig:overall} summarizes the resulting framework and its empirical behavior across training and evaluation.

Experiments on agentic tasks, including \textsc{ALFWorld}, \textsc{WebShop}, and \textsc{Search-QA}, show that \textsc{UCOB} improves over skill-free RL, skill-augmented baselines, and self-distillation methods across model scales.
Ablations verify the contribution of each module and design choice, while further analyses examine teacher routing, memory evolution, continual adaptation across sequential environments, and training cost.
The main contributions of this work are as follows:
\begin{tcolorbox}[
    enhanced,
    colback=black!2,
    colframe=black!15,
    boxrule=0.35pt,
    arc=1.5mm,
    sharp corners=southwest,
    sharp corners=southeast,
    left=1.2mm,right=1.2mm,top=1.2mm,bottom=1.2mm
]
\begin{enumerate}[leftmargin=2em, itemsep=2pt, topsep=2pt]
    \item We motivate and propose \textbf{Credit-Aware Bidirectional Self-Distillation (CBSD)}, which selects on-policy teacher directions from same-task, same-anchor-state returns and is supported by a local policy-improvement analysis of credited bidirectional updates.
    \item We develop \textbf{\textsc{UCOB}}, a unified framework that utilizes skills via CBSD and evolves them with dual-granularity skill memory, utility-aware retrieval, and reflection self-training.
    \item We validate \textsc{UCOB} across model scales, outperforming SOTA baselines such as SDAR and Skill1 by up to \textbf{23.5} and \textbf{18.0} absolute success-rate gains on \textsc{ALFWorld} and \textsc{WebShop}; analyses further confirm its module and design effectiveness.
\end{enumerate}
\end{tcolorbox}


\section{Observations}
\label{sec:observations}

We test whether skill-conditioned context can serve as a reliable privileged teacher through three diagnostics: \textbf{(i)} whether it consistently outperforms the no-skill view, \textbf{(ii)} whether making it on-policy resolves teacher ambiguity, and \textbf{(iii)} which local evidence should choose the distillation direction when the two views disagree.
Figure~\ref{fig:obs-settings} summarizes the diagnostic protocols: panel (a) covers the fixed-direction skill/no-skill setups for testing teacher reliability and on-policy exposure, while panel (b) illustrates the anchor-state return comparison for selecting the local teacher direction.

\begin{figure*}[t]
    \centering
    \begin{subfigure}[t]{0.58\textwidth}
        \centering
        \includegraphics[width=\linewidth]{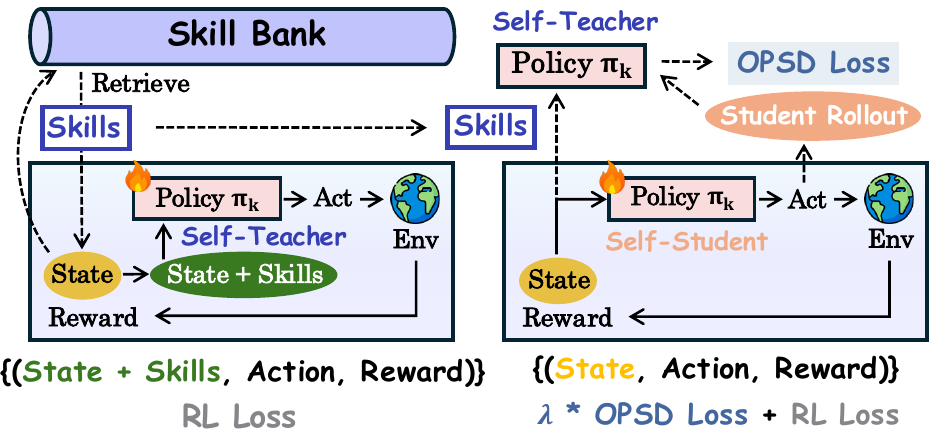}
        \caption{Fixed-direction skill/no-skill protocol}
        \label{fig:obs-settings-a}
    \end{subfigure}
    \hfill
    \begin{subfigure}[t]{0.39\textwidth}
        \centering
        \includegraphics[width=\linewidth]{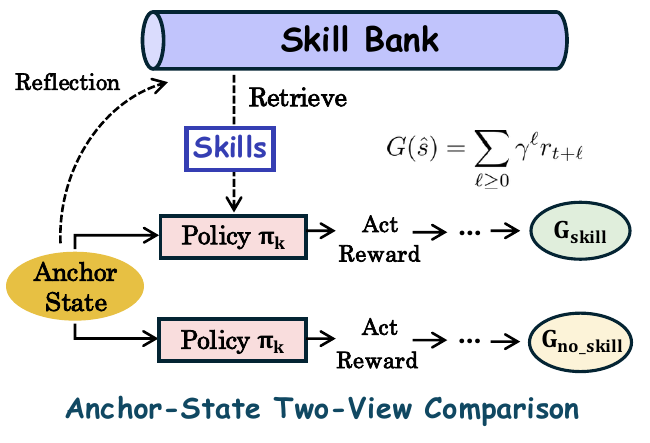}
        \caption{Anchor-state return-gap diagnostic}
        \label{fig:obs-settings-b}
    \end{subfigure}
    \caption{Unified schematic of the observation-study protocols. (a) Fixed-direction skill/no-skill protocol for the no-skill-only and dual-rollout settings. (b) Anchor-state return-gap diagnostic. Detailed settings and rollout details are given in Appendix~\ref{app:obs-settings}.}
    \label{fig:obs-settings}
    \vspace{-1em}
\end{figure*}

\paragraph{Skill-conditioned teachers are unreliable and not self-correcting.}
\begin{wrapfigure}{r}{0.4\textwidth}
    \vspace{-2em}
    \centering
    \includegraphics[width=0.39\textwidth]{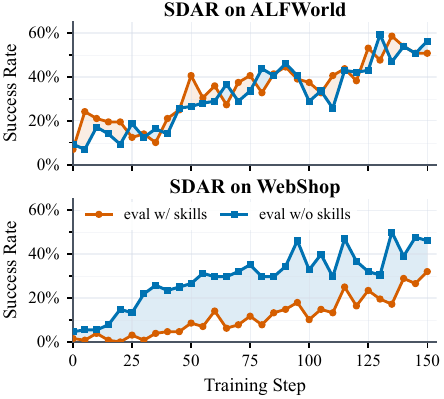}
    \vspace{-0.5em}
    \caption{SDAR Evaluation w/ and w/o skills during training on Qwen3-1.7B.}
    \label{fig:sdar-dual-acc}
    \vspace{-0.5em}
\end{wrapfigure}
We first revisit SDAR~\citep{lu2026selfdistilledagenticreinforcementlearning}, an asymmetric self-distillation setup where rollouts use the \textit{no-skill} prompt while a \textit{skill-conditioned} prompt serves as the privileged teacher. 
This design assumes that, for states induced by the no-skill rollout, the skill-conditioned view provides a more reliable distillation target than the corresponding no-skill view. 
Figure~\ref{fig:sdar-dual-acc} challenges this assumption on \textsc{ALFWorld} and \textsc{WebShop}: the with-skill evaluation does not consistently dominate the no-skill evaluation, and on \textsc{WebShop}, it remains weaker throughout training. 
Thus, \textbf{\textit{adding a skill context does not automatically yield a reliable teacher}}. 
Since the privileged view is not directly corrected by return feedback under its own context distribution, a weak teacher is also \textbf{\textit{not self-correcting}}; distillation may continue to transfer supervision from unverified conditioned behavior.

\paragraph{Rollouts with skills mitigate exposure mismatch but not teacher ambiguity.}
\begin{wrapfigure}{r}{0.4\textwidth}
    \vspace{-4em}
    \centering
    \includegraphics[width=0.39\textwidth]{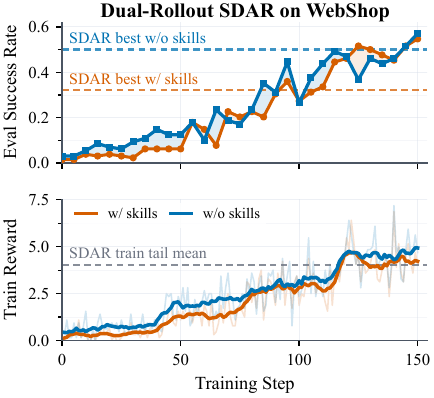}
    \vspace{-0.5em}
    \caption{Dual-rollout SDAR evaluation and training on \textsc{WebShop}.}
    \label{fig:dsdar-dual-acc}
    \vspace{-1em}
\end{wrapfigure}
A natural remedy is to place skills into the rollout itself, so the skill-conditioned view is optimized through environment interaction rather than only serving as a teacher outside the rollout path. 
We therefore test a dual-rollout fixed-direction variant: each training batch samples half of the trajectories with skills and half without skills, following the training design of Skill-SD~\citep{wang2026skillsdskillconditionedselfdistillation}, while keeping the SDAR-style skill-to-no-skill distillation direction.
Both context views therefore receive on-policy RL updates, reducing exposure mismatch. 
This dual-rollout setting is summarized in Figure~\ref{fig:obs-settings}(a).
However, distillation remains one-way: only the no-skill branch receives the extra distillation loss, while the skill-conditioned branch is updated by RL alone. 
Figure~\ref{fig:dsdar-dual-acc} shows that this stronger baseline narrows the gap between views, but still does not make the skill-conditioned view consistently better. 
On \textsc{WebShop}, skill-conditioned training rollouts also have lower average reward than no-skill rollouts. 
Thus, \textbf{\textit{making the skill-conditioned view on-policy does not make it universally authoritative}}.
The question becomes whether current skill-induced behavior should be \textbf{\textit{trusted and internalized when useful, or corrected by the no-skill view when misleading}}.

\begin{wrapfigure}{r}{0.4\textwidth}
    \vspace{-1.0em}
    \centering
    \includegraphics[width=0.39\textwidth]{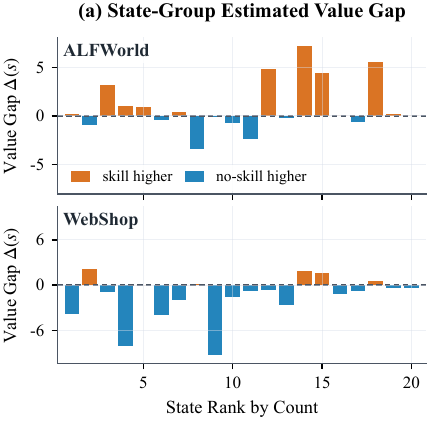}\\[0.4em]
    \includegraphics[width=0.39\textwidth]{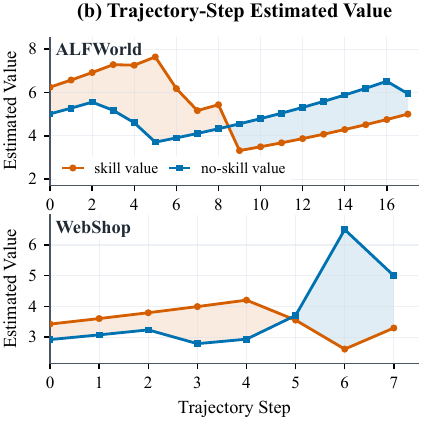}
    \vspace{-0.5em}
    \caption{State-group and trajectory-step diagnostics for two-view rollouts.}
    \label{fig:fig3-motivation}
    \vspace{-2em}
\end{wrapfigure}

\paragraph{Teacher direction is locally value-dependent.}
To decide whether skill-induced behavior should be trusted at a state, we compare skill-conditioned and no-skill views within the same task and anchor state. 
Following GiGPO~\citep{feng2025groupingrouppolicyoptimizationllm}, we group rollouts by anchor state and estimate each view by average return-to-go. 
For a rollout record reaching state $s_t$, we use $G(s_t)=\sum_{\ell\ge 0}\gamma^\ell r_{t+\ell}$, average it within each task-anchor group, and define the same-state value gap as $\Delta(\tilde{s})=\bar G_{+}(\tilde{s})-\bar G_{0}(\tilde{s})$, where $+$ and $0$ denote the two views. 
A positive gap indicates useful skill context to internalize, whereas a negative gap marks skill-conditioned behavior that is locally less reliable and should be corrected by the no-skill view. 
Figure~\ref{fig:obs-settings}(b) illustrates this anchor-state comparison.
Using a Qwen3-1.7B checkpoint trained with dual-rollout SDAR, Figure~\ref{fig:fig3-motivation}(a) ranks frequent state groups and plots the skill-minus-no-skill value gap. 
The gaps are mixed-sign in both environments, showing that \textbf{\textit{teacher quality is not one-sided}}; teacher direction should depend on the local value gap within each state group. 
Figure~\ref{fig:fig3-motivation}(b) further shows that the better branch can switch across decision steps within a sampled trajectory. 
Thus, even when a trajectory is good overall, not every step provides reliable supervision; a trajectory-level teacher may give the right global direction but the wrong local update. 
This makes \textbf{\textit{trajectory summaries too coarse for deciding each update}}.

\paragraph{From observations to \textsc{UCOB}.}
The above observations turn state grouping into credit-aware cross-context supervision. 
For each task and anchor state, matched skill/no-skill rollouts form a credited pair: the higher-value branch gives behavior to distill into the opposite context, while the lower-value branch reveals which context should not be trusted as teacher. 
The same credited pairs also provide a more reliable source of state skills than trajectory-level summarization, unlike prior dual-granularity methods such as D2Skill~\citep{tu2026dynamicdualgranularityskillbank}, which summarize state skills from trajectories or trajectory fragments rather than \textbf{\textit{anchor-state value comparisons}}.
Therefore, we introduce \textbf{\textsc{UCOB}}, a unified framework that operationalizes credited state comparisons for skill learning: it generates and retrieves both task- and state-level skills, uses \textbf{\textit{local value gaps}} to utilize skills through \textbf{\textit{credit-aware bidirectional self-distillation}}, and evolves both the answer policy and the skill generator. 
These observations also identify missing axes in prior skill-conditioned agent methods; Table~\ref{tab:method-positioning} positions \textsc{UCOB} against representative skill-augmented agentic RL and self-distillation methods.

\begin{table}[h]
    \centering
    \caption{Comparison of representative skill-conditioned agent methods. Task-level skills are assumed throughout this family and are therefore omitted; the skill column indicates whether a method additionally maintains state-level skills. The blue triangle for SDAR denotes partial credit-aware supervision: it performs self-distillation with a token-level gate, but does not explicitly suppress harmful teachers by redirecting supervision to the no-skill branch. 
    These axes are later tested through component ablations and analysis in Sections~\ref{sec:ablation}--\ref{sec:analysis}.}
    \label{tab:method-positioning}
    \vspace{-0.5em}
    \scriptsize
    \setlength{\tabcolsep}{2.5pt}
    \resizebox{\textwidth}{!}{
    \begin{tabular}{lccccccc}
        \toprule
        \makecell[l]{\textbf{Method}} & \makecell[c]{\textbf{Date}\\\textbf{(YY.M)}} & \makecell[c]{\textbf{State-Level}\\\textbf{Skills}} & \makecell[c]{\textbf{Credit-Aware}\\\textbf{Supervision}} & \makecell[c]{\textbf{Skill-Free}\\\textbf{Inference}} & \makecell[c]{\textbf{Evolving}\\\textbf{Skill Memory}} & \makecell[c]{\textbf{Evolving}\\\textbf{Skill Policy}} \\
        \midrule
        SkillRL~\citep{xia2026skillrlevolvingagentsrecursive} & 26.2 & \xmark & \xmark & \xmark & \cmark & \xmark \\
        D2Skill~\citep{tu2026dynamicdualgranularityskillbank} & 26.3 & \cmark & \xmark & \xmark & \cmark & \xmark \\
        RetroAgent~\citep{zhang2026retroagentsolvingevolvingretrospective} & 26.3 & \xmark & \xmark & \xmark & \cmark & \cmark \\
        Skill-SD~\citep{wang2026skillsdskillconditionedselfdistillation} & 26.4 & \xmark & \xmark & \cmark & \xmark & \xmark \\
        Skill1~\citep{shi2026skill1unifiedevolutionskillaugmented} & 26.5 & \xmark & \xmark & \xmark & \cmark & \cmark \\
        SDAR~\citep{lu2026selfdistilledagenticreinforcementlearning} & 26.5 & \xmark & \pmark & \cmark & \xmark & \xmark \\
        StepOPSD~\citep{zhang2026stepopsdstepawareonlinepreference} & 26.5 & \xmark & \xmark & \cmark & \xmark & \xmark \\
        SkillC~\citep{lin2026skillclearningautonomousskill} & 26.5 & \xmark & \cmark & \cmark & \cmark & \xmark \\
        SAPO~\citep{zhang2026coevolvingskillgenerationpolicy} & 26.6 & \xmark & \xmark & \xmark & \cmark & \cmark \\
        \textbf{\textsc{UCOB (Ours)}} & \textbf{\textsc{26.6}}& \cmark & \cmark & \cmark & \cmark & \cmark \\
        \bottomrule
    \end{tabular}}
    \vspace{-1em}
\end{table}


\section{Problem Setup and Preliminaries}
\label{sec:preliminaries}

\paragraph{Problem setup.}
We consider reinforcement learning for multi-turn language agents with retrieved skills.
Each task instance is indexed by $u$.
At turn $t$, the agent observes a state $s_t$ containing the task description, current observation, and interaction history.
The policy $\pi_\theta$ generates a textual response $y_t=(y_{t,1},\ldots,y_{t,L_t})$, from which an executable action is parsed and sent to the environment.
Let $\gamma\in[0,1)$ denote the discount factor, and write $\tilde{s}=\operatorname{anchor}(s)$ for the canonical anchor-state abstraction used in local comparisons.
The environment returns reward $r_t$ and the next state; we denote the return-to-go by $G_t=\sum_{\ell\ge0}\gamma^\ell r_{t+\ell}$.
A skill-conditioned agent retrieves textual skills $M(s_t)$ and forms a skill-conditioned prompt $P_+(s_t)=\operatorname{Prompt}(s_t,M(s_t))$, while the no-skill prompt is $P_0(s_t)=\operatorname{Prompt}(s_t,\emptyset)$.
These prompts induce two context views of the same policy, $\pi_+(\cdot\mid s_t)=\pi_\theta(\cdot\mid P_+(s_t))$ and $\pi_0(\cdot\mid s_t)=\pi_\theta(\cdot\mid P_0(s_t))$.

\paragraph{Skill-conditioned OPSD.}
On-policy self-distillation (OPSD) trains on student-sampled rollouts and minimizes a per-token divergence between teacher and student distributions along these on-policy prefixes~\citep{zhao2026selfdistilledreasoner}.
For a sampled response $y$, teacher prompt $P_{\mathrm{tea}}$, and student prompt $P_{\mathrm{stu}}$, we write a generic OPSD loss as
\begin{equation}
    \mathcal L_{\mathrm{OPSD}}
    =\frac{1}{Z_{\mathrm{OPSD}}}\sum_j m_j
    D_{\mathrm{KL}}\!\left(
    \operatorname{sg}\!\left[\pi_\theta(\cdot\mid P_{\mathrm{tea}},y_{<j})\right]\,\Vert\,
    \pi_\theta(\cdot\mid P_{\mathrm{stu}},y_{<j})
    \right),
    \end{equation}
where $m_j$ masks valid response tokens, $Z_{\mathrm{OPSD}}=\sum_j m_j$ normalizes over valid positions, and $\operatorname{sg}[\cdot]$ denotes stop-gradient on the teacher distribution.
Skill-conditioned variants such as Skill-SD and SDAR typically set the skill-conditioned view as the teacher, i.e., $P_{\mathrm{tea}}=P_+$ and $P_{\mathrm{stu}}=P_0$~\citep{wang2026skillsdskillconditionedselfdistillation,lu2026selfdistilledagenticreinforcementlearning}.
This fixed direction provides dense supervision but does not verify whether the skill-conditioned view is locally better.

\paragraph{On-policy agentic RL and credit assignment.}
Our online RL backbone follows GiGPO~\citep{feng2025groupingrouppolicyoptimizationllm}, which extends group-based RL with step-level credit for multi-turn agents.
For the rollout group of task $u$, let $\tau_i=\{(s_{i,t},y_{i,t},r_{i,t})\}_{t=1}^{T_i}$ be a rollout from $\pi_{\theta_{\mathrm{old}}}$, $R_i=\sum_t r_{i,t}$ its episode return, and $G_{i,t}$ the return-to-go at turn $t$.
GiGPO combines episode-level credit with anchor-state credit:
\begin{equation}
    A_i^E=\frac{R_i-\mu_u^E}{F_u^E},\quad
    A_{i,t}^S=\frac{G_{i,t}-\mu_{u,\tilde{s}}^S}{F_{u,\tilde{s}}^S},\quad
    A_{i,t}=A_i^E+\omega A_{i,t}^S,\quad
    \tilde{s}=\operatorname{anchor}(s_{i,t}).
\end{equation}
Here $\mu$ and $F$ are the mean and normalizer within the corresponding episode or anchor-state group, and $\omega$ weights step-level credit.
Broadcasting $A_{i,t}$ to valid response tokens yields the clipped on-policy loss
\begin{equation}
    \mathcal L_{\mathrm{RL}}
    =-\frac{1}{Z_{\mathrm{RL}}}\sum_{i,t,j}m_{i,t,j}
    \min\!\left(
    \varrho_{i,t,j}A_{i,t},
    \operatorname{clip}(\varrho_{i,t,j},1-\epsilon_{\mathrm{clip}},1+\epsilon_{\mathrm{clip}})A_{i,t}
    \right),
    \end{equation}
where $m_{i,t,j}$ is the valid-token mask and $Z_{\mathrm{RL}}=\sum_{i,t,j}m_{i,t,j}$.
Let $c_{i,t}\in\{+,0\}$ denote whether the response was sampled under the skill-conditioned or no-skill prompt.
The token-level context is $h_{i,t,j}=(P_{c_{i,t}}(s_{i,t}),y_{i,t,<j})$, and the importance ratio is $\varrho_{i,t,j}=\pi_\theta(y_{i,t,j}\mid h_{i,t,j})/\pi_{\theta_{\mathrm{old}}}(y_{i,t,j}\mid h_{i,t,j})$.
Here $\epsilon_{\mathrm{clip}}$ is the PPO-style clipping range.
We additionally use the reference-policy regularizer $\mathcal L_{\mathrm{KL}}=\mathbb E[D_{\mathrm{KL}}(\pi_\theta\|\pi_{\mathrm{ref}})]$.
Later, \textsc{UCOB} reuses the same anchor-state groups as local credited records for cross-context comparisons; the exact record construction is defined in Section~\ref{subsec:mixed_rollout_grouping}.

\section{Method: \textsc{UCOB}}
\label{sec:method}

\textsc{UCOB} treats retrieved skills as candidate context rather than privileged labels.
Figure~\ref{fig:overall}(a) gives a compact sketch of this four-stage loop, and Figure~\ref{fig:ucob-main} expands it into the full method.
\textsc{UCOB} forms a closed loop that first utilizes retrieved skills as candidate context, then evolves the skill memory from credited interaction evidence.
At its core, \textsc{UCOB} compares skill-conditioned and no-skill views at the same task-anchor state, distills from the higher-return view, and feeds the resulting credit back into policy and skill updates.
Appendix~\ref{app:full-algorithm} provides the full procedure in Algorithm~\ref{alg:ucob-full}.

\begin{figure*}[t]
    \centering
    \includegraphics[width=\textwidth]{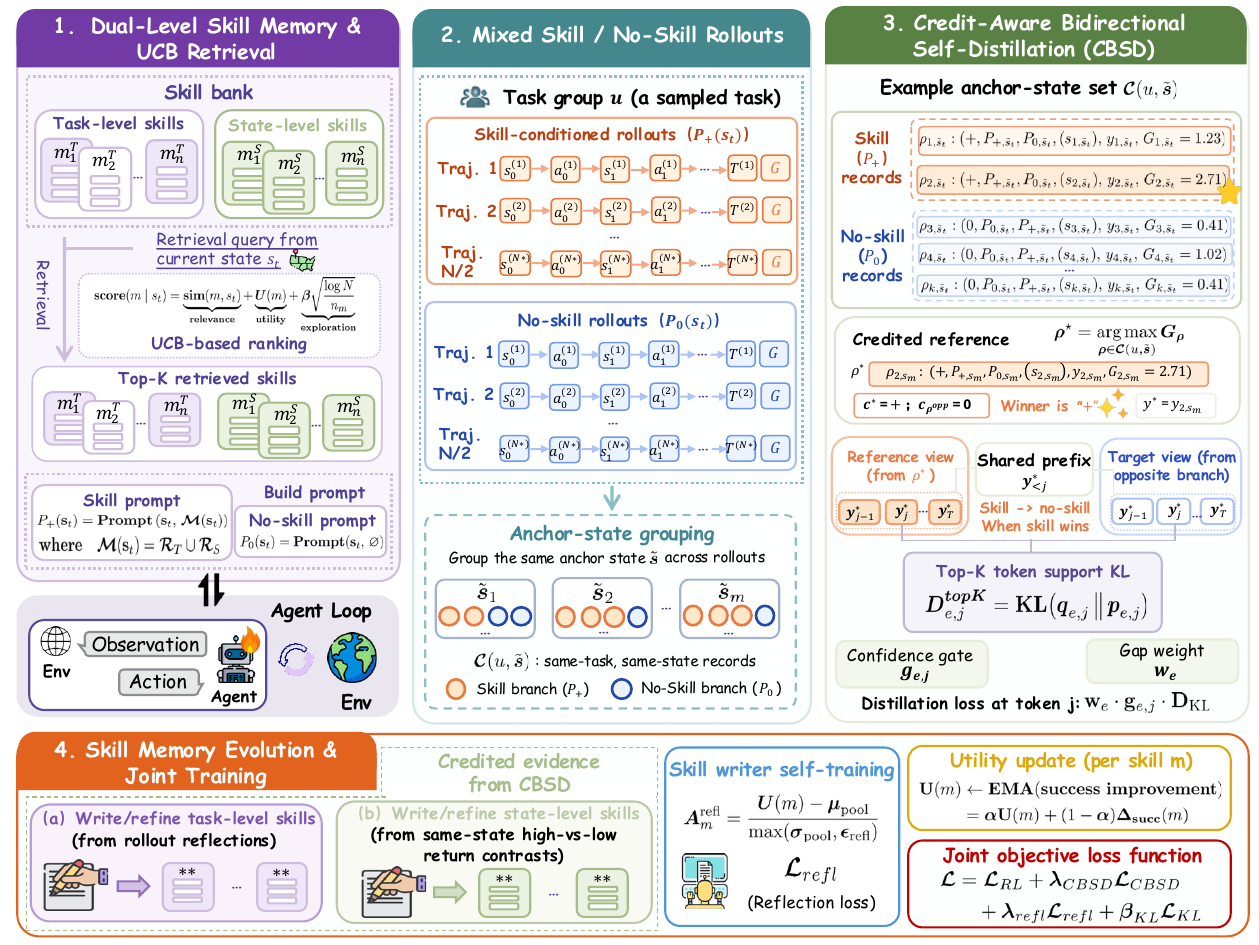}
    \caption{Overview of \textsc{UCOB} as a \textbf{four-stage training loop}: (1) retrieve dual-granularity skills to build skill/no-skill views, (2) collect mixed rollouts and group anchor states, (3) route local supervision via credit-aware bidirectional self-distillation, and (4) jointly update the skill memory and policy. The updated memory and policy are reused in the next round.}
    \label{fig:ucob-main}
    \vspace{-1em}
\end{figure*}

\subsection{Step 1: Dual-Granularity Skill Retrieval}
\label{subsec:dual_memory_rollout}


\paragraph{Utility-aware UCB skill retrieval.}
\textsc{UCOB} maintains a dual-granularity skill memory, consisting of task-level and state-level skill pools, and expands it online from rollout reflections.
Task-level skills capture reusable episode-level strategies, while state-level skills capture local decision rules grounded in credited same-state contrasts; the writing rule is detailed in Section~\ref{subsec:skill_evolution_objective}.
Each skill item $m$ contains a textual principle, an applicability condition, a retrieval key $k_m$, a utility score $U(m)$, and a usage count $n_m$.
For each pool $p\in\{\mathrm{task},\mathrm{state}\}$, \textsc{UCOB} ranks candidate skills with an Upper Confidence Bound (UCB) score:
\begin{equation}
    R_p(m;s_t)=
    \alpha\,\operatorname{sim}\!\left(q_p(s_t),k_m\right)
    +(1-\alpha)\left(U(m)+\eta\sqrt{\frac{\log (N_p+1)}{n_m+1}}\right),
    \label{eq:ucb-retrieval}
\end{equation}
where $q_p(s_t)$ is the task- or state-level retrieval query, $N_p$ is the total usage count of pool $p$, and $\eta$ controls UCB exploration.
The retrieved memory sets are
\begin{equation}
    M_p(s_t)=\operatorname{TopK}^{K_p^{\mathrm{mem}}}_{m\in\mathcal M_p} R_p(m;s_t),
    \qquad p\in\{\mathrm{task},\mathrm{state}\}.
    \label{eq:retrieved-memory}
\end{equation}
Here $K_p^{\mathrm{mem}}$ is the skill retrieval budget for pool $p$.
The union $M_{\mathrm{task}}(s_t)\cup M_{\mathrm{state}}(s_t)$ is inserted into the skill-conditioned prompt $P_{+}(s_t)$, while the no-skill prompt $P_0(s_t)$ receives the same state information without retrieved skills.

\subsection{Step 2: Mixed Rollouts and Anchor-State Grouping}
\label{subsec:mixed_rollout_grouping}

\paragraph{Mixed skill/no-skill rollouts.}
For each task group, \textsc{UCOB} samples mixed rollouts from the current policy under the two prompt views.
Half of the trajectories act with the skill-conditioned prompt $P_{+}(s_t)$, and the remaining trajectories act with the no-skill prompt $P_0(s_t)$.
Both branches interact with the environment and are optimized by the online RL loss.
This differs from fixed privileged-teacher setups~\citep{lu2026selfdistilledagenticreinforcementlearning}, where the skill-conditioned view may be used only outside the rollout path.
No branch is fixed as the teacher during rollout; teacher selection is deferred to same-anchor-state return comparisons.

\paragraph{Anchor-state grouping.}
Following the anchor-state grouping mechanism of GiGPO~\citep{feng2025groupingrouppolicyoptimizationllm}, \textsc{UCOB} retroactively groups repeated states inside each task group.
Let $\mathcal A_u=\{\operatorname{anchor}(s_{i,t})\}_{i,t}$ be the set of distinct anchor states visited by rollouts of task group $u$.
For each anchor $\tilde{s}\in\mathcal A_u$, we construct
\begin{equation}
    \mathcal C(u,\tilde{s})=
    \{\rho_{i,t}\mid \rho_{i,t}\text{ is from task }u,\ 
    \operatorname{anchor}(s_{i,t})=\tilde{s}\}.
    \label{eq:anchor-group}
\end{equation}
Each record is represented as
\begin{equation}
    \rho_{i,t}=\big(s_{i,t}, c_{i,t}, P_{c_{i,t}}(s_{i,t}), P_{\bar c_{i,t}}(s_{i,t}), y_{i,t}, G_{i,t}\big),
\end{equation}
where $c_{i,t}\in\{+,0\}$ is the rollout branch, $\bar c_{i,t}$ denotes the opposite branch, $P_{c_{i,t}}$ is the acting prompt, $P_{\bar c_{i,t}}$ is the opposite prompt, $y_{i,t}$ is the sampled response, and $G_{i,t}$ is the return-to-go.
The opposite prompt is not used to generate the action; it allows a response sampled under one context view to supervise the other view in Section~\ref{subsec:bidirectional_opd}.
Thus, CBSD operates on same-task, same-anchor-state comparison sets.

\subsection{Step 3: Credit-Aware Bidirectional Self-Distillation}
\label{subsec:bidirectional_opd}

CBSD is the core credit-assignment mechanism of \textsc{UCOB}.
For each same-task, same-anchor-state group, it uses return evidence to decide which context view should teach the other, then distills the higher-return response distribution to the opposite view.
Here \emph{bidirectional} means credit-routed teacher selection: skill to no-skill when retrieved skills help, and no-skill to skill when the skill context is locally harmful.

\paragraph{Credited direction selection.}
For each anchor-state set $\mathcal C(u,\tilde{s})$, \textsc{UCOB} first selects the highest-return record as the credited reference:
\begin{equation}
    \rho^\star=\arg\max_{\rho\in\mathcal C(u,\tilde{s})}G_\rho,
    \qquad
    c^\star=c_{\rho^\star},
    \qquad
    y^\star=y_{\rho^\star}.
    \label{eq:credited-reference}
\end{equation}
Here $\rho^\star$ is the local winner, $c^\star$ is its prompt branch, and $y^\star$ is the response to be distilled.
For every record $\rho^{\mathrm{opp}}$ from the opposite branch, i.e., $c_{\rho^{\mathrm{opp}}}=\bar c^\star$, we form a candidate pair $e=(\rho^\star,\rho^{\mathrm{opp}})$ with return gap $\Delta_e=G_{\rho^\star}-G_{\rho^{\mathrm{opp}}}$.
We keep the pair in $\mathcal P_{\mathrm{CBSD}}$ only if $\Delta_e>\epsilon_{\mathrm{CBSD}}$.
For each accepted pair, $P^{\mathrm{ref}}_e$ is the acting prompt stored in $\rho^\star$, and $P^{\mathrm{tgt}}_e$ is a target prompt from the opposite context view within the same anchor-state set.
If $c^\star=+$, the skill-conditioned view teaches the no-skill view; if $c^\star=0$, the no-skill view teaches the skill-conditioned view.
Thus, the teacher direction is selected by same-task, same-anchor-state return evidence rather than by the presence of retrieved skills.

\paragraph{Top-$K$ token-support matching.}
For each accepted pair, \textsc{UCOB} applies response-level token top-$K$ OPD, inspired by teacher top-$K$ local support matching~\citep{fu2026revisitingonpolicydistillation}.
At response position $j$, both views are evaluated on the same credited prefix $y^\star_{<j}$, but under different prompts:
\[
    h^{\mathrm{ref}}_{e,j}=(P^{\mathrm{ref}}_e,y^\star_{<j}),
    \qquad
    h^{\mathrm{tgt}}_{e,j}=(P^{\mathrm{tgt}}_e,y^\star_{<j}).
\]
The credited reference view defines the local token support to be matched:
\begin{equation}
    S_{e,j}=\operatorname{TopK}^{K_{\mathrm{tok}}}\!\left(\pi_\theta(\cdot\mid h^{\mathrm{ref}}_{e,j})\right).
\end{equation}
Here $K_{\mathrm{tok}}$ is the token-support size, distinct from the memory retrieval budget $K_p^{\mathrm{mem}}$.
On $S_{e,j}$, we renormalize the reference and target distributions:
\begin{equation}
    q_{e,j}(v)=\operatorname{sg}\!\left[
    \frac{\pi_\theta(v\mid h^{\mathrm{ref}}_{e,j})}
    {\sum_{z\in S_{e,j}}\pi_\theta(z\mid h^{\mathrm{ref}}_{e,j})}
    \right],
    \qquad
    p_{e,j}(v)=\frac{\pi_\theta(v\mid h^{\mathrm{tgt}}_{e,j})}
    {\sum_{z\in S_{e,j}}\pi_\theta(z\mid h^{\mathrm{tgt}}_{e,j})},
    \quad v\in S_{e,j}.
\end{equation}
The token-level matching term is $D_{e,j}^{\mathrm{top}K}=\mathrm{KL}(q_{e,j}\|p_{e,j})$.
Compared with sampled-token supervision, this transfers the reference view's local distributional preference to the target view.

\paragraph{CBSD objective.}
Following the token-level gating idea in SDAR~\citep{lu2026selfdistilledagenticreinforcementlearning}, we use a confidence gate to suppress positions where the credited reference is not more reliable than the target:
\begin{equation}
    g_{e,j}=\sigma\!\left(\beta_{\mathrm{gate}}\left[
    \log\pi_\theta(y_j^\star\mid h^{\mathrm{ref}}_{e,j})
    -\log\pi_\theta(y_j^\star\mid h^{\mathrm{tgt}}_{e,j})
    \right]\right).
\end{equation}
Here $\sigma(\cdot)$ is the sigmoid function and $\beta_{\mathrm{gate}}$ controls the sharpness of the gate.
The reference branch can be either skill-conditioned or no-skill.
The resulting credit-aware bidirectional self-distillation objective is
\begin{equation}
    \mathcal L_{\mathrm{CBSD}}=
    \frac{1}{Z_{\mathrm{CBSD}}}\sum_{e\in\mathcal P_{\mathrm{CBSD}}}\sum_j
    \chi_{e,j}D_{e,j}^{\mathrm{top}K},
    \label{eq:cbsd-objective}
\end{equation}
where $\chi_{e,j}=m_{e,j}w_e g_{e,j}$, $w_e=\operatorname{clip}(\Delta_e/\tau_{\mathrm{CBSD}},0,w_{\max})$, and $Z_{\mathrm{CBSD}}=\sum_{e,j}m_{e,j}$.
Here $m_{e,j}$ masks valid response tokens, $w_e$ scales supervision by the same-task, same-anchor-state return gap, and $g_{e,j}$ applies the confidence gate.
In practice, $\mathcal L_{\mathrm{CBSD}}$ provides auxiliary cross-context supervision alongside the online RL loss.
Section~\ref{sec:theory} gives a local policy-improvement interpretation of this credited direction selection, with detailed proofs in Appendix~\ref{app:theory-cbsd}.

\subsection{Step 4: Skill Memory Evolution and Joint Training}
\label{subsec:skill_evolution_objective}

\textsc{UCOB} closes the loop by writing skills, updating their utilities, and training the reflection-based skill writer to improve future skill generation.

\paragraph{Skill writing and utility update.}
The rollout records used for CBSD also provide evidence for writing and scoring skills.
\textsc{UCOB} writes task-level skills by reflecting on rollout groups, and writes state-level skills by contrasting higher- and lower-return records within the same anchor-state set $\mathcal C(u,\tilde{s})$.
A state-level skill records when the state occurs, which behavior worked better, and which behavior should be avoided; hence it is grounded in a local value contrast rather than a trajectory-level summary.

To score a used skill $m$, let $b_u$ be the mean no-skill success of task group $u$, and let $\delta_m$ be the success improvement of rollouts using $m$ over $b_u$.
We compute this credit separately for task- and state-level skill pools.
\textsc{UCOB} updates the skill utility by an exponential moving average (EMA) with rate $\beta_U$:
\begin{equation}
    U(m)\leftarrow (1-\beta_U)U(m)+\beta_U\delta_m.
    \label{eq:utility-update}
\end{equation}
The updated utilities feed back into the UCB retrieval score in Section~\ref{subsec:dual_memory_rollout}, so skills that repeatedly help under on-policy rollouts become easier to retrieve later.

\paragraph{Skill-writer self-training.}
\textsc{UCOB} further improves the skill writer by replaying its own reflection prompt-response pairs with utility-derived advantages.
For a stored skill $m$, we compute a pool-normalized reflection advantage $A_m^{\mathrm{refl}}=(U(m)-\mu_{\mathrm{pool}})/\max(\sigma_{\mathrm{pool}},\epsilon_{\mathrm{refl}})$, with statistics computed separately for task-level and state-level pools.
Here $\mu_{\mathrm{pool}}$ and $\sigma_{\mathrm{pool}}$ are the mean and standard deviation of utilities in the corresponding pool, and $\epsilon_{\mathrm{refl}}$ is a small numerical floor.
Given the reflection prompt $x_m$ and response $z_m$, we optimize a clipped policy-gradient loss:
\begin{equation}
    \mathcal L_{\mathrm{refl}}
    =
    \frac{1}{Z_{\mathrm{refl}}}\sum_{m,j} \ell_{m,j}^{\mathrm{refl}}
    \max\!\left(
    -\varrho_{m,j}^{\mathrm{refl}}A_m^{\mathrm{refl}},
    -\operatorname{clip}(\varrho_{m,j}^{\mathrm{refl}},1-\epsilon_{\mathrm{clip}},1+\epsilon_{\mathrm{clip}})A_m^{\mathrm{refl}}
    \right),
    \label{eq:reflection-loss}
\end{equation}
where $\ell_{m,j}^{\mathrm{refl}}$ masks valid reflection-response tokens, $\varrho_{m,j}^{\mathrm{refl}}=\pi_\theta(z_{m,j}\mid x_m,z_{m,<j})/\pi_{\theta_{\mathrm{old}}}(z_{m,j}\mid x_m,z_{m,<j})$, and $Z_{\mathrm{refl}}=\sum_{m,j}\ell_{m,j}^{\mathrm{refl}}$.
Thus, high-utility memories increase the likelihood of their reflection responses, while low-utility memories are downweighted or suppressed.

\paragraph{Overall objective.}
\textsc{UCOB} minimizes the combined training loss:
\begin{equation}
    \mathcal L=
    \mathcal L_{\mathrm{RL}}
    +\lambda_{\mathrm{CBSD}}\mathcal L_{\mathrm{CBSD}}
    +\lambda_{\mathrm{refl}}\mathcal L_{\mathrm{refl}}
    +\beta_{\mathrm{KL}}\mathcal L_{\mathrm{KL}}.
    \label{eq:ucob-objective}
\end{equation}
Here $\lambda_{\mathrm{CBSD}}$, $\lambda_{\mathrm{refl}}$, and $\beta_{\mathrm{KL}}$ balance the distillation, reflection, and KL regularization terms.
The base losses $\mathcal L_{\mathrm{RL}}$ and $\mathcal L_{\mathrm{KL}}$ are defined in Section~\ref{sec:preliminaries}, while $\mathcal L_{\mathrm{CBSD}}$ and $\mathcal L_{\mathrm{refl}}$ are introduced above.
This objective couples online policy improvement with credited cross-context distillation and utility-guided skill evolution.

\section{Theoretical Perspective}
\label{sec:theory}

This section supports the CBSD contribution by explaining why same-anchor-state return gaps can define a useful teacher direction.
We give a local, one-update interpretation of CBSD, not a global convergence claim for the full nonstationary \textsc{UCOB} system; full proofs and a conditional contraction result are in Appendix~\ref{app:theory-cbsd}.
Fix an old policy $\pi$, one task-anchor context $x=(u,\tilde{s})$, and the old-policy continuation after a response $y$.
For branch $c\in\{+,0\}$, let $p_c(y\mid x)$ denote the response distribution of the skill-conditioned or no-skill view under this old policy, and define
\begin{equation}
    \mu_c(x)=\mathbb E_{y\sim p_c(\cdot\mid x)}Q^\pi(x,y).
\end{equation}
Because both views are compared at the same anchor state, they share the same state-value baseline:
\begin{equation}
    \Delta(x)=\mu_+(x)-\mu_0(x)
    =
    \mathbb E_{p_+}A^\pi(x,y)-\mathbb E_{p_0}A^\pi(x,y).
\end{equation}
Thus, the skill-minus-no-skill gap is a branchwise local advantage difference, not a raw trajectory preference.

\begin{proposition}[First-order local improvement direction of CBSD]
\label{prop:main-local-cbsd}
Let $h$ and $\ell$ be the higher- and lower-value branches at context $x$, selected by $\mu_h(x)>\mu_\ell(x)$, and let $\delta_x=\mu_h(x)-\mu_\ell(x)>0$.
Consider an idealized small update that moves the lower-value branch toward the higher-value branch,
\begin{equation}
    p_\ell^\eta(\cdot\mid x)=(1-\eta)p_\ell(\cdot\mid x)+\eta p_h(\cdot\mid x),
    \qquad \eta\in[0,1].
\end{equation}
Then the local advantage surrogate improves by
\begin{equation}
    \mathbb E_{p_\ell^\eta}A^\pi(x,y)-\mathbb E_{p_\ell}A^\pi(x,y)=\eta\delta_x.
\end{equation}
If the induced full-policy update $\pi^\eta$ is trust-region controlled so that state-distribution shift is bounded by $C\eta^2$ for some constant $C>0$, the performance-difference view~\citep{kakade2002approximatelyoptimal,schulman2015trustregionpolicyoptimization} yields
\begin{equation}
    J(\pi^\eta)-J(\pi)
    \ge
    \frac{d_\pi(x)}{1-\gamma}\eta\delta_x
    -
    C\eta^2,
\end{equation}
where $d_\pi(x)$ is the discounted visitation mass of $x$.
\end{proposition}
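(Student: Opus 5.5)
The proof splits into two parts: an exact linear identity for the mixture, followed by the standard performance-difference argument with a second-order correction.

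\textbf{Step 1: the local surrogate identity.} Expectation is linear in the sampling distribution, so for the mixture $p_\ell^\eta=(1-\eta)p_\ell+\eta p_h$ we have
\[
\mathbb E_{p_\ell^\eta}A^\pi(x,y)=(1-\eta)\,\mathbb E_{p_\ell}A^\pi(x,y)+\eta\,\mathbb E_{p_h}A^\pi(x,y).
\]
Subtracting $\mathbb E_{p_\ell}A^\pi$ leaves $\eta\bigl(\mathbb E_{p_h}A^\pi-\mathbb E_{p_\ell}A^\pi\bigr)$. Both branches sit at the same anchor context $x$, so they share the baseline $V^\pi(x)$ and it cancels, exactly as in the display defining $\Delta(x)$. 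The bracket therefore equals $\mu_h(x)-\mu_\ell(x)=\delta_x$. This step is exact and needs only that $Q^\pi(x,\cdot)$ is integrable under both $p_h$ and $p_\ell$.

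\textbf{Step 2: lifting to $J$ via the performance-difference lemma.} I would invoke the Kakade--Langford identity
\[
J(\pi^\eta)-J(\pi)=\frac{1}{1-\gamma}\,\mathbb E_{x'\sim d_{\pi^\eta}}\,\mathbb E_{y\sim\pi^\eta(\cdot\mid x')}A^\pi(x',y).
\]
The induced update $\pi^\eta$ changes only the lower-value branch's response distribution at the context $x$ and is the old policy elsewhere. Since $\mathbb E_{\pi}A^\pi(x',\cdot)=0$ at every unchanged context, the inner expectation vanishes off $x$. Following TRPO, I then replace $d_{\pi^\eta}$ by $d_\pi$ to get the surrogate
\[
L_\pi(\pi^\eta)=\frac{1}{1-\gamma}\,d_\pi(x)\,\bigl(\mathbb E_{\pi^\eta(\cdot\mid x)}A^\pi-\mathbb E_{\pi(\cdot\mid x)}A^\pi\bigr).
\]
The lower-value branch carries the relevant mass at $x$. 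Under the idealization that the acting policy at $x$ is the branch being updated (or after absorbing the branch-selection probability into $d_\pi(x)$), Step 1 gives $L_\pi(\pi^\eta)=\frac{d_\pi(x)}{1-\gamma}\eta\delta_x$.

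\textbf{Step 3: controlling the surrogate error.} Finally I bound $|J(\pi^\eta)-J(\pi)-L_\pi(\pi^\eta)|$. The error is $\frac{1}{1-\gamma}\sum_{x'}(d_{\pi^\eta}-d_\pi)(x')\,\bar A(x')$, where $\bar A(x')=\mathbb E_{\pi^\eta}A^\pi(x',\cdot)$. Both factors are $O(\eta)$: $\|d_{\pi^\eta}-d_\pi\|_1\lesssim\frac{\gamma}{1-\gamma}\max_{x'}\mathrm{TV}(\pi^\eta,\pi)\le\frac{\gamma\eta}{1-\gamma}$, since the mixture has TV at most $\eta$ from $p_\ell$, and $|\bar A(x)|\le\eta\max|A^\pi|$. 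Their product is $O(\eta^2)$. Alternatively one can directly take the stated hypothesis that the trust-region control bounds this distribution-shift term by $C\eta^2$. Rearranging gives the claimed inequality.

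The main obstacle is Step 3, together with the bookkeeping in Step 2 about how the two branches compose the full policy at $x$. Because the statement assumes the $C\eta^2$ bound, I would lean on that hypothesis and make it precise as a bound on the surrogate gap. I would also remark that $C$ can be taken as $\frac{2\gamma\max|A^\pi|}{(1-\gamma)^2}$ via the TRPO/CPI bound, which justifies the assumption rather than leaving it bare.
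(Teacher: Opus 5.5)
Your proposal is correct and follows the paper's own route. Linearity of expectation plus the shared baseline $V^\pi(x)$ gives the exact $\eta\delta_x$ identity, and the performance-difference lemma with old-occupancy substitution and a second-order trust-region error gives the lower bound; your explicit TV/occupancy bound (with its concrete $C$) and your branch-mixture bookkeeping at $x$ just make precise what the paper's proof asserts in one line.
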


Proposition~\ref{prop:main-local-cbsd} is a first-order local guarantee: with correctly estimated branch ordering and a small update, lower-to-higher distillation improves the old-policy local advantage surrogate and contributes a positive first-order term to the return lower bound.
The token-level CBSD loss can be viewed as a KL-proximal stochastic approximation to this interpolation, using a stop-gradient higher-return reference and a nonnegative gap-dependent weight $w_e$.
For mixed-sign gaps, fixed skill-to-no-skill distillation can be locally harmful when $\Delta(x)<0$; CBSD flips the teacher direction and keeps the signed local update positive.
Appendix~\ref{app:theory-cbsd} proves the proposition and further shows same-anchor credit, fixed-direction failure, and conditional KL contraction toward a local optimal response.

\section{Experiments}
\label{sec:experiments}

\subsection{Experimental Setup}
\label{subsec:exp_setup}

\begin{table*}[t]
    \centering
    \caption{Main performance on \textsc{ALFWorld}, \textsc{WebShop}, and \textsc{Search-QA}. We report success rate (\%) for \textsc{ALFWorld}, Score/Succ (\%) for \textsc{WebShop}, and accuracy (\%) for \textsc{Search-QA}. For \textsc{ALFWorld}, \emph{Succ} is the overall success rate rather than an unweighted mean over subtasks, and for \textsc{Search-QA}, \emph{Avg} is the mean accuracy over the seven datasets. For each backbone, \textbf{bold} and \underline{underline} mark the best and second-best values in the highlighted primary columns. Entries marked $\dagger$ are copied from the original papers, entries marked $^*$ are copied from SDAR~\citep{lu2026selfdistilledagenticreinforcementlearning}.}
    \label{tab:main_results}
    \scriptsize
    \setlength{\tabcolsep}{1.5pt}
    \renewcommand{\arraystretch}{1.05}
    \resizebox{\textwidth}{!}{%
    \begin{tabular}{lccccccYcYcccccccY}
        \toprule
        & \multicolumn{7}{c}{\textbf{\textsc{ALFWorld}}} & \multicolumn{2}{c}{\textbf{\textsc{WebShop}}} & \multicolumn{8}{c}{\textbf{\textsc{Search-QA}}} \\
        \cmidrule(lr){2-8} \cmidrule(lr){9-10} \cmidrule(lr){11-18}
        \textbf{Method} & \textbf{Pick} & \textbf{Look} & \textbf{Clean} & \textbf{Heat} & \textbf{Cool} & \textbf{Pick2} & \cellcolor{white}\textbf{Succ} & \textbf{Score} & \cellcolor{white}\textbf{Succ} & \textbf{NQ} & \textbf{Triv} & \textbf{Pop} & \textbf{Hotp} & \textbf{2Wk} & \textbf{MuS} & \textbf{Bam} & \cellcolor{white}\textbf{Avg} \\
        \midrule
        \rowcolor{groupgray} \multicolumn{18}{>{\columncolor{groupgray}}l}{\textit{Qwen2.5-7B-Instruct}} \\
        GRPO$^*$ & 91.2 & 87.5 & 96.2 & 81.0 & 65.0 & 57.9 & 81.2 & 80.9 & 72.6 & 45.1 & 63.7 & 44.0 & 43.6 & 43.2 & 16.8 & 37.6 & 42.0 \\
        GiGPO$^\dagger$ & 97.7 & 82.7 & 98.8 & 83.7 & 89.3 & 79.2 & 90.8 & 84.4 & 72.8 & 46.4 & 64.7 & 46.1 & 41.6 & 43.6 & 18.9 & 68.9 & 47.2 \\
        SkillRL$^\dagger$ & 97.9 & 71.4 & 90.0 & 90.0 & 95.5 & 87.5 & 89.9 & 85.2 & 72.7 & 45.9 & 63.3 & 45.9 & 43.2 & 40.3 & 20.2 & 73.8 & 47.5 \\
        D2Skill$^\dagger$ & 93.8 & 77.8 & 94.7 & 95.0 & 95.5 & 72.0 & 87.8 & 91.1 & 80.5 & 48.7 & 63.4 & 44.9 & 47.5 & 43.7 & 21.0 & 67.7 & 48.1 \\
        SkillC$^\dagger$ & 88.5 & 78.6 & 91.2 & 94.7 & 100.0 & 88.9 & 90.6 & 85.6 & 74.0 & - & - & - & - & - & - & - & - \\
        SAPO$^\dagger$ & 98.7 & 73.9 & 98.1 & 92.6 & 85.0 & 89.2 & \underline{92.2} & 90.5 & 78.1 & 48.4 & 62.9 & 46.7 & 45.0 & 45.2 & 18.3 & 46.4 & 44.7 \\
        RLSD$^*$ & 100.0 & 87.5 & 92.3 & 58.8 & 80.0 & 65.2 & 82.0 & 87.4 & 77.3 & 46.8 & 63.0 & 44.4 & 45.5 & 48.9 & 21.5 & 73.0 & \textbf{49.0} \\
        Skill-SD$^*$ & 93.9 & 93.8 & 90.9 & 100.0 & 69.2 & 68.4 & 85.1 & 86.1 & 76.5 & 47.1 & 64.5 & 47.8 & 44.2 & 42.1 & 20.2 & 69.0 & 47.8 \\
        SDAR$^*$ & 94.7 & 75.0 & 100.0 & 86.7 & 68.2 & 78.9 & 85.9 & 89.4 & \underline{82.8} & 46.3 & 63.5 & 48.2 & 43.8 & 48.4 & 19.6 & 73.0 & \textbf{49.0} \\
        Skill1 & 100.0 & 83.3 & 89.5 & 100.0 & 88.5 & 85.0 & \underline{92.2} & 86.2 & 77.3 & 46.8 & 46.3 & 47.8 & 43.7 & 39.3 & 18.2 & 70.6 & 44.7 \\
        \rowcolor{ucobblue} \textbf{\textsc{UCOB}} & 93.8 & 90.9 & 94.1 & 95.8 & 92.9 & 89.5 & \cellcolor{ucobmetric}\textbf{93.0} & 91.8 & \cellcolor{ucobmetric}\textbf{85.9} & 46.4 & 65.6 & 46.7 & 45.7 & 42.9 & 19.2 & 70.6 & \cellcolor{ucobmetric}\underline{48.2} \\
        \midrule
        \rowcolor{groupgray} \multicolumn{18}{>{\columncolor{groupgray}}l}{\textit{Qwen2.5-3B-Instruct}} \\
        GRPO$^*$ & 91.2 & 62.5 & 96.2 & 61.9 & 65.0 & 47.4 & 75.0 & 79.8 & 63.3 & 39.3 & 60.6 & 41.1 & 37.4 & 34.6 & 15.4 & 26.4 & 36.4 \\
        GiGPO & 94.3 & 72.7 & 93.8 & 100.0 & 64.3 & 89.5 & 85.9 & 83.8 & 70.3 & 43.0 & 60.5 & 46.9 & 36.2 & 35.5 & 12.1 & 64.1 & 42.6 \\
        RLSD$^*$ & 87.9 & 75.0 & 90.9 & 75.0 & 73.1 & 68.4 & 79.7 & 84.4 & 66.4 & 41.5 & 58.6 & 42.3 & 40.4 & 40.2 & 16.8 & 66.9 & 43.8 \\
        Skill-SD$^*$ & 88.2 & 50.0 & 96.2 & 52.4 & 65.0 & 57.9 & 73.4 & 75.9 & 64.0 & 44.4 & 60.4 & 44.0 & 39.5 & 40.4 & 15.4 & 64.9 & \underline{44.1} \\
        SDAR$^*$ & 97.1 & 62.5 & 100.0 & 61.9 & 75.0 & 84.2 & 84.4 & 85.0 & 68.0 & 44.8 & 58.1 & 44.3 & 38.6 & 36.2 & 15.7 & 66.1 & 43.4 \\
        StepOPSD$^\dagger$ & 97.1 & 66.7 & 87.0 & 79.1 & 78.9 & 95.0 & 83.6 & - & - & 45.0 & 61.6 & 46.2 & 39.5 & 39.5 & 14.4 & 65.3 & \textbf{44.5} \\
        Skill1 & 96.7 & 76.9 & 96.3 & 93.8 & 77.3 & 95.0 & \underline{90.6} & 84.1 & \underline{75.0} & 44.2 & 60.3 & 44.8 & 39.2 & 35.9 & 13.1 & 64.9 & 43.2 \\
        \rowcolor{ucobblue} \textbf{\textsc{UCOB}} & 93.8 & 93.3 & 95.8 & 94.1 & 91.0 & 78.9 & \cellcolor{ucobmetric}\textbf{92.2} & 84.5 & \cellcolor{ucobmetric}\textbf{78.1} & 45.0 & 61.4 & 47.0 & 37.2 & 35.8 & 12.5 & 66.2 & \cellcolor{ucobmetric}43.6 \\
        \midrule
        \rowcolor{groupgray} \multicolumn{18}{>{\columncolor{groupgray}}l}{\textit{Qwen3-1.7B}} \\
        GRPO$^*$ & 71.1 & 41.7 & 36.4 & 40.0 & 31.8 & 31.6 & 46.1 & 67.3 & 38.3 & 40.0 & 58.9 & 43.5 & 35.4 & 30.3 & 12.0 & 65.7 & 40.8 \\
        GiGPO & 96.7 & 69.3 & 88.9 & 62.5 & 59.1 & 85.0 & \underline{79.7} & 70.0 & 45.3 & 41.1 & 59.3 & 45.1 & 34.1 & 32.7 & 10.5 & 64.5 & 41.0 \\
        RLSD$^*$ & 50.0 & 37.5 & 61.5 & 19.0 & 50.0 & 21.1 & 42.2 & 74.0 & 50.8 & 38.6 & 57.3 & 43.0 & 34.5 & 34.1 & 11.5 & 65.3 & 40.6 \\
        Skill-SD$^*$ & 52.9 & 37.5 & 69.2 & 42.9 & 60.0 & 36.8 & 52.3 & 81.8 & 53.9 & 39.1 & 57.5 & 45.4 & 34.8 & 34.1 & 10.7 & 64.1 & 40.8 \\
        SDAR$^*$ & 73.5 & 25.0 & 76.9 & 33.3 & 40.0 & 36.8 & 53.9 & 76.8 & 58.6 & 39.7 & 58.9 & 45.3 & 35.9 & 35.5 & 12.6 & 65.3 & 41.9 \\
        StepOPSD$^\dagger$ & 64.7 & 44.4 & 56.5 & 60.9 & 42.1 & 55.0 & 56.3 & - & - & 40.5 & 59.4 & 44.4 & 37.1 & 32.0 & 11.6 & 64.1 & 41.3 \\
        Skill1 & 71.4 & 54.5 & 59.4 & 72.7 & 60.0 & 73.7 & 65.6 & 83.2 & \underline{61.7} & 42.7 & 59.8 & 46.0 & 35.4 & 34.8 & 12.0 & 65.7 & \textbf{42.3} \\
        \rowcolor{ucobblue} \textbf{\textsc{UCOB}} & 96.2 & 85.7 & 95.5 & 84.2 & 81.5 & 88.8 & \cellcolor{ucobmetric}\textbf{89.1} & 91.5 & \cellcolor{ucobmetric}\textbf{79.7} & 44.1 & 59.6 & 45.7 & 36.8 & 31.9 & 12.2 & 64.5 & \cellcolor{ucobmetric}\underline{42.1} \\
        \bottomrule
    \end{tabular}%
    }
    \vspace{-1.0em}
\end{table*}

\paragraph{Benchmarks and metrics.}
Following the evaluation protocol of SDAR~\citep{lu2026selfdistilledagenticreinforcementlearning}, we evaluate on \textsc{ALFWorld}~\citep{shridhar2020alfworld}, \textsc{WebShop}~\citep{yao2022webshop}, and \textsc{Search-QA}~\citep{jin2025searchr1}.
For \textsc{ALFWorld}, we report success rate on six task types: Pick, Look, Clean, Heat, Cool, and Pick2, together with the overall success rate across evaluation episodes.
For \textsc{WebShop}, we report task score and success rate on the 128-task evaluation set.
For \textsc{Search-QA}, we report answer accuracy on NQ, TriviaQA, PopQA, HotpotQA, 2Wiki, MuSiQue, and Bamboogle, where the first three are single-hop and the latter four are multi-hop search datasets.

\paragraph{Models and baselines.}
We use Qwen2.5-7B-Instruct and Qwen2.5-3B-Instruct~\citep{qwen2025qwen25technicalreport}, together with Qwen3-1.7B~\citep{yang2025qwen3technicalreport}, as the main backbones.
The compared methods fall into three groups.
\textbf{No-skill RL baselines} optimize the policy without retrieved skills, including GRPO and GiGPO~\citep{feng2025groupingrouppolicyoptimizationllm}.
\textbf{Skill-augmented agent methods} retrieve, maintain, or evolve reusable skills during training, including SkillRL~\citep{xia2026skillrlevolvingagentsrecursive}, D2Skill~\citep{tu2026dynamicdualgranularityskillbank}, SkillC~\citep{lin2026skillclearningautonomousskill}, SAPO~\citep{zhang2026coevolvingskillgenerationpolicy}, and Skill1~\citep{shi2026skill1unifiedevolutionskillaugmented}.
\textbf{Self-distillation methods} add auxiliary teacher-student supervision, including GRPO+OPSD~\citep{zhao2026selfdistilledreasoner}, RLSD~\citep{yang2026rlsd}, Skill-SD~\citep{wang2026skillsdskillconditionedselfdistillation}, SDAR~\citep{lu2026selfdistilledagenticreinforcementlearning}, and StepOPSD~\citep{zhang2026stepopsdstepawareonlinepreference}.
Together, these baselines cover the main comparison axes of \textsc{UCOB}: skill-free policy optimization, skill memory evolution, and self-distillation with fixed or adaptive supervision.
Unless otherwise specified, \textsc{UCOB} uses the same task splits, rollout group size, and evaluation budget as the corresponding baselines.
We provide the main training hyperparameters and dynamic-memory settings in Appendix~\ref{app:impl-details}.

\subsection{Main Results}
\label{subsec:main_results}

Table~\ref{tab:main_results} shows that \textsc{UCOB} consistently improves the primary agentic success metrics across model scales.
On \textsc{ALFWorld}, it achieves the best overall success rate for all three backbones, reaching \(93.0\), \(92.2\), and \(89.1\) on Qwen2.5-7B, Qwen2.5-3B, and Qwen3-1.7B, respectively.
On \textsc{WebShop}, \textsc{UCOB} also gives the best success rate for every backbone (\(85.9\), \(78.1\), and \(79.7\)) and top or near-top scores, indicating gains in both sparse success and graded task reward.
The gains are most pronounced on the compact Qwen3-1.7B model: relative to Skill1, \textsc{UCOB} improves \textsc{ALFWorld} from \(65.6\) to \(89.1\) and \textsc{WebShop} from \(61.7\) to \(79.7\), giving \(+23.5\) and \(+18.0\) point gains; relative to SDAR, the gains are \(+35.2\) and \(+21.1\) points.
The trend also holds on larger backbones, where \textsc{UCOB} surpasses the strongest prior result by \(0.8/3.1\) points on Qwen2.5-7B and \(1.6/3.1\) points on Qwen2.5-3B for \textsc{ALFWorld}/\textsc{WebShop}.
On \textsc{Search-QA}, \textsc{UCOB} remains competitive rather than uniformly dominant, staying within \(0.8\), \(0.9\), and \(0.2\) points of the best average accuracy across the three backbones.
Overall, credit-aware bidirectional self-distillation is most beneficial in multi-turn agentic environments while preserving strong search-based QA performance.

The gains are not purely determined by model size, but also by the task bottleneck and the backbone's ability to calibrate retrieved skills against current evidence.
On \textsc{ALFWorld} and \textsc{WebShop}, smaller models leave larger headroom in planning, state tracking, and error recovery; the dual-granularity skill memory provides an external experience scaffold, while CBSD converts sparse return differences into local, direction-adaptive supervision.
This makes the compensation particularly visible for Qwen3-1.7B.
In contrast, \textsc{Search-QA} depends more on search quality, evidence reading, and answer synthesis than on repeatedly reused state-level interaction skills.
Its gains are therefore less tied to model size, and may also depend on whether the backbone can use retrieved skills as auxiliary guidance rather than over-following them as fixed templates.

\subsection{Ablation Study}
\label{subsec:ablation}

\begin{figure*}[t]
    \centering
    \includegraphics[width=\textwidth]{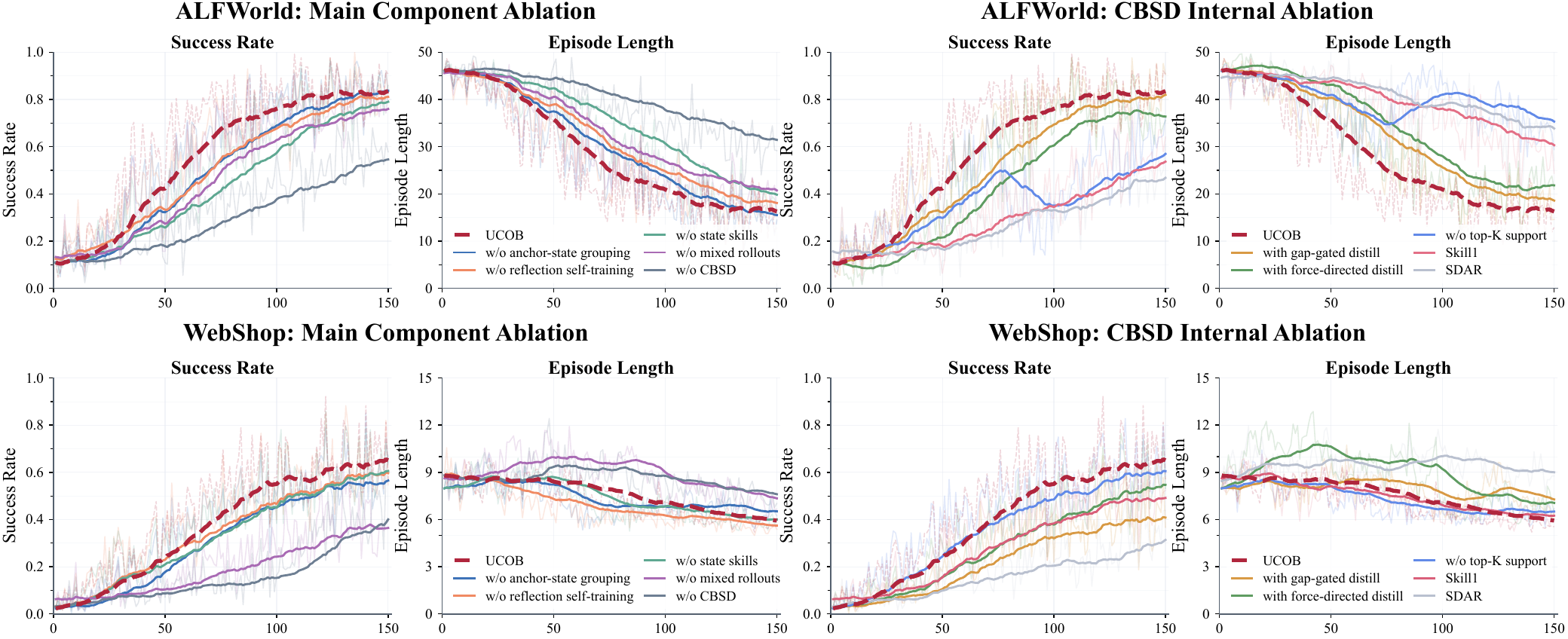}
    \caption{Ablation study under the Qwen3-1.7B backbone on \textsc{ALFWorld} and \textsc{WebShop}.}
    \label{fig:ablation-main}
    \vspace{-1em}
\end{figure*}

\paragraph{Main component ablation.}
\label{sec:ablation}
Figure~\ref{fig:ablation-main} (left) ablates the major components of \textsc{UCOB} under the Qwen3-1.7B backbone.
We report both success rate and episode length, and interpret length together with success: shorter episodes indicate more efficient behavior only when success is comparable.
The ``w/o'' variants remove one part of the full pipeline at a time: \textbf{(i)} \underline{\textit{w/o anchor-state grouping}} uses trajectory-level rather than local anchor-state comparison; \textbf{(ii)} \underline{\textit{w/o reflection self-training}} removes utility-weighted skill-writer training; \textbf{(iii)} \underline{\textit{w/o state skills}} keeps only the task-level skill pool; \textbf{(iv)} \underline{\textit{w/o mixed rollouts}} removes the paired skill/no-skill on-policy rollout design; and \textbf{(v)} \underline{\textit{w/o CBSD}} removes bidirectional distillation while keeping memory retrieval and reflection write-back.
The first three variants incur moderate degradation, indicating that anchor-state credit assignment, state-level memory, and skill-writer self-training provide complementary gains rather than redundant capacity.
The larger gap appears when the two-view learning signal is removed.
\textit{w/o mixed rollouts} eliminates paired skill/no-skill on-policy evidence, while \textit{w/o CBSD} retains memory retrieval and reflection write-back but removes the explicit cross-context distillation objective.
Their degradation shows that memory-augmented rollouts alone leave skill/no-skill comparison to sparse RL returns, instead of converting local return gaps into dense supervision.
Thus, for credit-assignment RL, \textbf{directly mixing skills into on-policy rollouts is not a substitute for an explicit local distillation signal}.

\paragraph{CBSD internal ablation.}
Figure~\ref{fig:ablation-main} (right) isolates the design choices inside CBSD.
It compares full CBSD with \textbf{(i)} \underline{\textit{gap-gated distill}}, which keeps the return-gap gate but restricts distillation to the skill-to-no-skill direction; \textbf{(ii)} \underline{\textit{force-directed distill}}, which fixes the teacher direction without local return comparison; and \textbf{(iii)} \underline{\textit{w/o top-\(K\) support}}, which removes the top-$K$ distributional support used to construct the OPD target.
The first two variants test directionality: even with a return-gap gate, one-way or fixed-direction distillation is less robust than allowing the higher-return context view to supervise the other.
This supports the central observation that skill-conditioned prompts are useful but not universally authoritative.
\textit{w/o top-$K$ support} preserves credit-aware direction selection but removes distributional support matching, indicating that top-$K$ OPD stabilizes the supervision target rather than replacing local credit assignment.
\textbf{Compared with Skill1 and SDAR, these ablated variants remain competitive and often stronger}, suggesting that the broader \textsc{UCOB} recipe already provides a strong skill-learning substrate; full CBSD provides the strongest performance stability across environments.

\paragraph{Additional ablation.}
\begin{wraptable}{r}{0.355\textwidth}
    \vspace{-1.5em}
    \centering
    \caption{Additional ablation}
    \label{tab:lightweight-sensitivity}
    \vspace{-0.5em}
    \scriptsize
    \setlength{\tabcolsep}{2pt}
    \renewcommand{\arraystretch}{1.05}
    \resizebox{\linewidth}{!}{%
    \begin{tabular}{@{}lcc@{}}
        \toprule
        \textbf{Setting} & \textbf{\textsc{ALFWorld}} & \textbf{\textsc{WebShop}} \\
        \midrule
        \rowcolor{groupgray}
        \multicolumn{3}{>{\columncolor{groupgray}}l}{\textit{CBSD coeff. $\lambda_{\mathrm{CBSD}}$}} \\
        0.01 & 83.6 & 71.1 \\
        \textbf{0.1 (Ours)} & \textbf{89.1} & \textbf{79.7} \\
        0.5 & 78.1 & 68.7 \\
        \midrule
        \rowcolor{groupgray}
        \multicolumn{3}{>{\columncolor{groupgray}}l}{\textit{Reflection backend}} \\
        Fixed memory & 82.0 & 70.3 \\
        External GPT-5.4 & 88.3 & 72.7 \\
        \textbf{Self-trained (Ours)} & \textbf{89.1} & \textbf{79.7} \\
        \bottomrule
    \end{tabular}
    }
    \vspace{-1.0em}
\end{wraptable}
Table~\ref{tab:lightweight-sensitivity} summarizes two additional ablations on key design axes.
For CBSD, a smaller \(\lambda_{\mathrm{CBSD}}\) weakens distillation, while a larger one can disrupt the balance between RL and bidirectional self-distillation; the default \(0.1\) performs best on both environments.
For skill evolution, the fixed memory initialized from SkillRL~\citep{xia2026skillrlevolvingagentsrecursive} lags behind evolving memories, showing that static external skills cannot replace continual memory updates.
External GPT-5.4 remains below self-trained reflection overall, indicating that \textsc{UCOB}'s gains do not rely on a stronger external writer.

\subsection{Analysis}
\label{sec:analysis}

\begin{figure}[t]
    \centering
    \includegraphics[width=\linewidth]{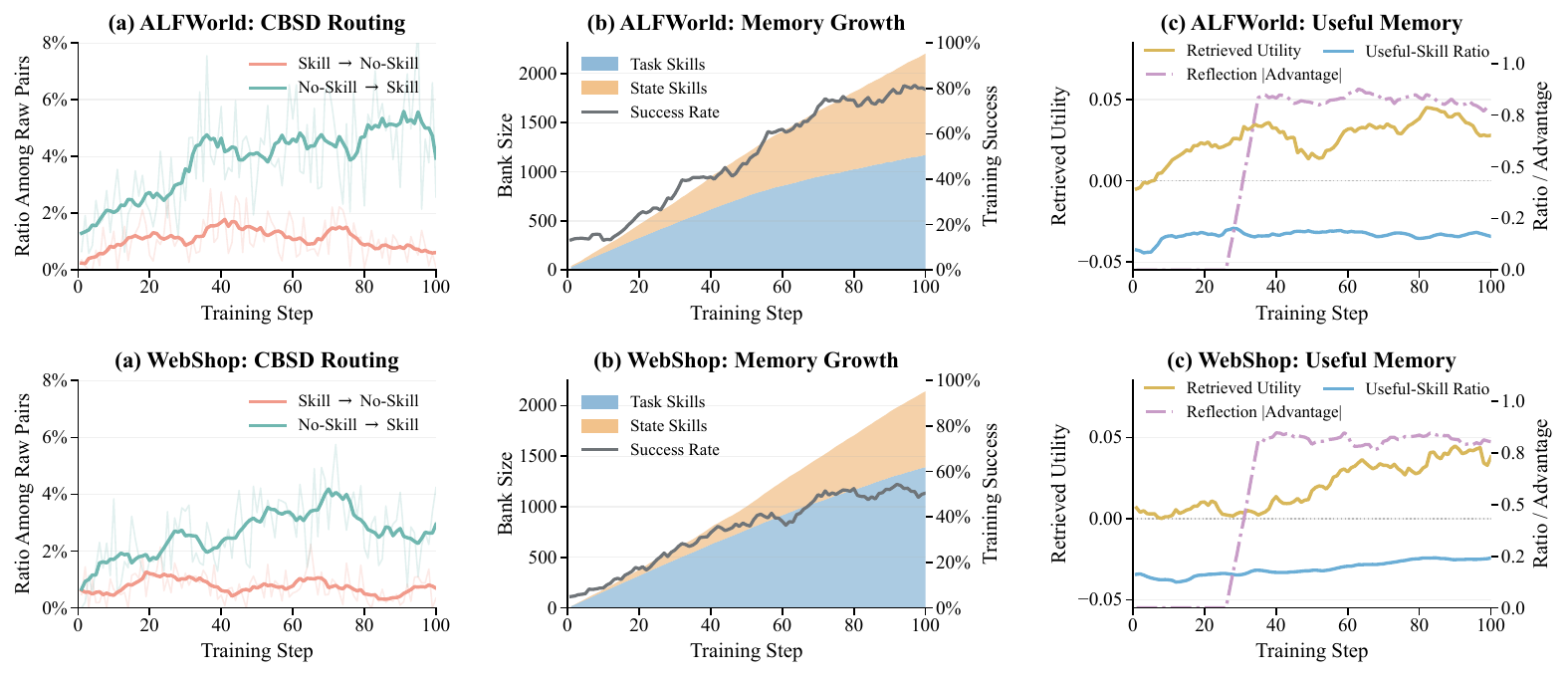}
    \caption{Mechanism and memory-evolution analysis of \textsc{UCOB} with \textsc{Qwen3-1.7B}.}
    \label{fig:analysis-mech-memory}
    \vspace{-1em}
\end{figure}

\paragraph{CBSD routing and two-view evaluation.}
\begin{wrapfigure}{r}{0.41\textwidth}
    \vspace{-1.5em}
    \centering
    \includegraphics[width=\linewidth]{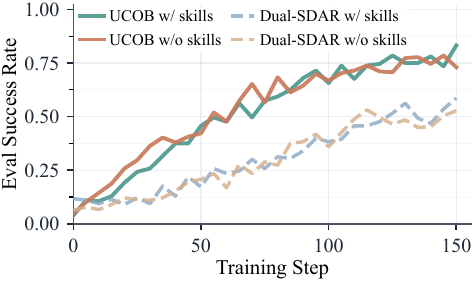}
    \caption{Skill/no-skill eval-view success with \textsc{Qwen3-1.7B}, averaged over \textsc{ALFWorld} and \textsc{WebShop}.}
    \label{fig:cbsd-view-eval}
    \vspace{-1em}
\end{wrapfigure}
Figure~\ref{fig:analysis-mech-memory}(a) tracks the teacher direction selected by CBSD during training.
In both environments, no-skill-to-skill routing occurs more often than skill-to-no-skill routing.
This indicates that reflection skills are not uniformly reliable local teachers: in many same-anchor-state comparisons, the no-skill view provides the stronger target and should correct, rather than be overwritten by, the skill-conditioned behavior.
Thus, \textbf{the skill-conditioned view is not a fixed privileged teacher}.
This routing behavior matches the theoretical view in Section~\ref{sec:theory}: when the local return gap changes sign, a fixed skill-to-no-skill teacher can become locally harmful, whereas CBSD keeps the update aligned with the higher-return view.
Figure~\ref{fig:cbsd-view-eval} further compares the two evaluation views.
Under fixed-direction dual-rollout SDAR, the skill-conditioned view is not consistently dominant.
In contrast, \textsc{UCOB} improves both skill and no-skill views, suggesting that CBSD makes the two views co-evolve through bidirectional distillation: useful skill-induced behavior is internalized, while misleading skills are corrected by the no-skill view.
\textbf{The resulting policy remains effective whether skills are provided at evaluation or not.}

\paragraph{Useful memory evolution.}
Figure~\ref{fig:analysis-mech-memory}(b) shows that both task-level and state-level skill pools grow during training, indicating that \textsc{UCOB} maintains an evolving memory rather than a fixed external repository.
Figure~\ref{fig:analysis-mech-memory}(c) examines whether this expanded memory remains useful after retrieval.
Here, \emph{Retrieved Utility} is the average utility score of retrieved skills, \emph{Useful-Skill Ratio} is the fraction of skills whose utility exceeds the initial value, and \emph{Reflection \(|\)Advantage\(|\)} measures the magnitude of the advantage signal used by the reflection writer.
Retrieved utility tends to remain positive in later training, while the useful-skill ratio stays non-trivial and the reflection signal remains active.
\textbf{The skill memory grows while its retrieved subset remains useful.}

\paragraph{Localized training cost.}
\begin{wrapfigure}{r}{0.41\textwidth}
    \vspace{-1.0em}
    \centering
    \includegraphics[width=0.98\linewidth]{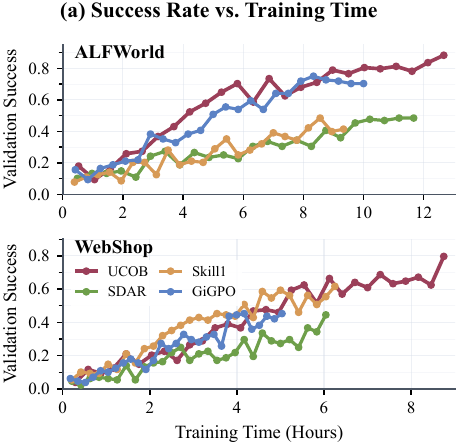}
    \par\vspace{0.30em}
    \includegraphics[width=0.98\linewidth]{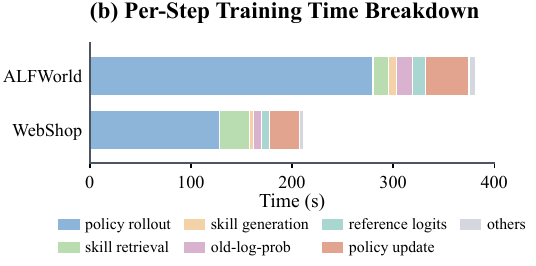}
    \caption{Cost analysis for \textsc{UCOB}.}
    \label{fig:training-cost}
    \vspace{-1.5em}
\end{wrapfigure}
Figure~\ref{fig:training-cost} evaluates the wall-clock cost effectiveness of \textsc{UCOB}.
All methods use the same \textbf{8 NVIDIA A800 GPUs} and \textsc{Qwen3-1.7B} backbone, and we measure \emph{training} time only, excluding validation and evaluation overhead.
Under this protocol, \textsc{UCOB} reaches higher validation success within a comparable wall-clock budget on both \textsc{ALFWorld} and \textsc{WebShop}.
The per-step breakdown further shows that policy rollout remains the dominant cost, while the additional CBSD-related computation is localized mainly to old-policy log-prob computation, reference-logit computation, and the policy update.
\textbf{\textsc{UCOB}'s extra training overhead is modest and localized.}

\paragraph{Continual adaptation across environments.}
Figure~\ref{fig:continual-learning} tests whether \textsc{UCOB} remains effective beyond isolated single-environment training.
We sequentially train each \textsc{Qwen3-1.7B} method on \textsc{ALFWorld}, \textsc{WebShop}, and \textsc{Search-QA}, with 150 updates per environment and evaluation on all three throughout the stream.
All RL methods use the same per-stage update budget.
As a supervised control, SFT uses 4,800 examples per environment collected by an RL-trained \textsc{Qwen3-1.7B}, with 32 examples per update to match the 150-step budget.
Recent SDFT results show that on-policy self-distillation can reduce forgetting during sequential learning from demonstrations~\citep{shenfeld2026selfdistillationenablescontinuallearning}.
Although our setting instead uses reward-driven agentic RL, we observe a related pattern: after the two subsequent stages, SDAR and \textsc{UCOB} retain \(67.1\%\) and \(82.8\%\) on \textsc{ALFWorld}, only \(2.7\) and \(4.7\) points below their respective stage-end results.
\textsc{UCOB} also preserves \textsc{WebShop} performance after \textsc{Search-QA} training (\(75.8\%\rightarrow77.3\%\)) and reaches \(45.2\%\) on \textsc{Search-QA}.
Its final three-environment average is \(68.4\%\), compared with \(57.0\%\) for SFT and \(51.8\%\) for SDAR.
\textbf{Thus, on-policy self-distillation provides useful continual-learning behavior, while CBSD and evolving skill memory yield a stronger retention--adaptation balance.}

\begin{figure*}[t]
    \centering
    \includegraphics[width=\textwidth]{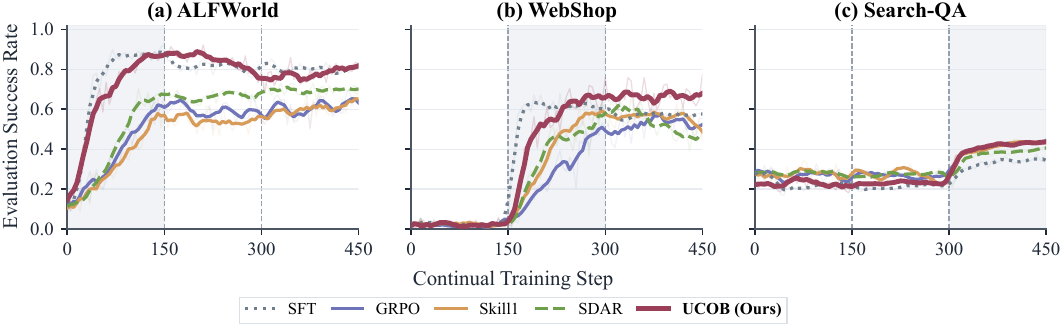}
    \caption{Continual adaptation under the \textsc{ALFWorld}$\rightarrow$\textsc{WebShop}$\rightarrow$\textsc{Search-QA} stream with \textsc{Qwen3-1.7B}. Each environment is trained for 150 steps. Dashed lines mark task transitions, and gray shading marks the active training interval in each panel.}
    \label{fig:continual-learning}
\end{figure*}

\paragraph{Case study.}
Figure~\ref{fig:case-study} shows two \textsc{ALFWorld} cases that make the bidirectional behavior concrete.
In the cellphone-to-dresser task, the no-skill view drifts to an unrelated search location, while retrieved state skills guide the skill-conditioned view to another plausible table; \textsc{UCOB} therefore selects the skill view and distills it into the no-skill view.
Conversely, in the two-watch-to-sidetable task, the retrieved state skill triggers an unnecessary attempt to take a watch that is already visible on the sidetable, while the no-skill view proceeds to place it; \textsc{UCOB} selects the no-skill view to correct the misleading skill context.
These cases show that skills are neither always trusted nor always discarded: \textbf{the teacher direction is decided by local return evidence.}

\section{Related Work}

\begin{figure}[t]
    \centering
    \includegraphics[width=.95\linewidth]{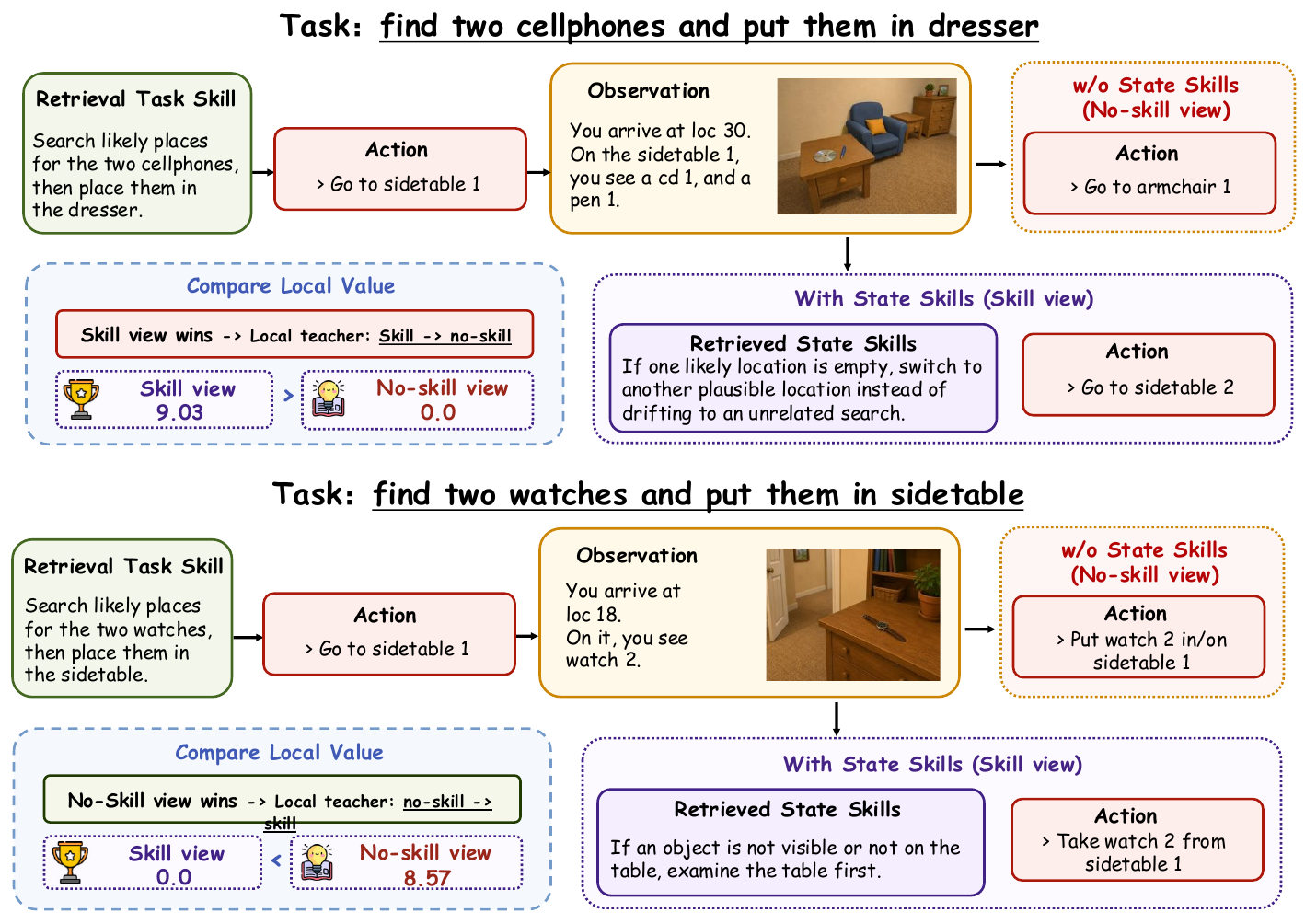}
    \caption{Case study of local teacher selection in \textsc{UCOB}.}
    \label{fig:case-study}
    \vspace{-0.5em}
\end{figure}

\paragraph{Language agents with memory and skills.}
Retrieval-augmented generation conditions language models on external information \citep{lewis2021retrievalaugmentedgenerationknowledgeintensivenlp}, and agent systems extend this idea with reasoning traces, verbal feedback, long-term memories, and reusable skills \citep{yao2023reactsynergizingreasoningacting,shinn2023reflexionlanguageagentsverbal,zhao2023expelllmagents,wang2023voyageropenendedembodiedagent,zhang2024surveymemorymechanism}.
Recent agent methods further train agents to build, curate, select, or internalize skill memories, including recursive skill-memory systems, dual-granularity skill pools, learned skill curation, evolving skill repositories, and explicit skill internalization \citep{xia2026skillrlevolvingagentsrecursive,mi2026skillpro,tu2026dynamicdualgranularityskillbank,ouyang2026skilloslearningskillcuration,zhang2026retroagentsolvingevolvingretrospective,shi2026skill1unifiedevolutionskillaugmented,vishe2026skillr1,lu2026skill0,he2026siri,lin2026skillclearningautonomousskill,zhang2026coevolvingskillgenerationpolicy,chen2026skillharness}.
These works differ in whether skills are treated mainly as retrieval-time context, evolving external memory, or behavior to be internalized by the policy.
\textsc{UCOB} addresses a complementary question: when retrieved skills are imperfect, training should decide whether to internalize the skill-conditioned behavior or correct it with evidence from the no-skill view, while evolving the skill memory itself.

\paragraph{Agentic RL and credit assignment.}
PPO and its variants remain common RL objectives for language agent training \citep{schulman2017proximalpolicyoptimizationalgorithms}.
For multiturn environments, sparse terminal rewards make trajectory-level updates inefficient, motivating agentic RL frameworks with finer-grained credit assignment, interactive guidance, or disentangled optimization \citep{xia2025ragen,feng2025groupingrouppolicyoptimizationllm,zeng2025turnlevelcredit,zhu2026gagpo,li2026distillselectivehindsightdistillation,zhang2026stepopsdstepawareonlinepreference,li2026reasoningtoolusecompeteagentic,liu2026socraticpo}.
\textsc{UCOB} uses credit evidence differently: it compares skill-conditioned and no-skill rollouts within the same task and anchor state, and converts the local return gap into a distillation direction between context views.
Thus, credit assignment determines not only which action tokens to reinforce, but also which context view should serve as the local teacher.

\paragraph{On-policy self-distillation with privileged context.}
Knowledge distillation and policy distillation transfer behavior from a teacher distribution to a student policy \citep{hinton2015distillingknowledgeneuralnetwork,rusu2016policydistillation}.
On-policy self-distillation adapts this idea to student-generated rollouts, providing dense token-level supervision alongside RL and motivating recent work on signal filtering, reversed teachers, and shorter on-policy prefixes \citep{zhao2026selfdistilledreasoner,yang2026rlsd,song2026surveyopd,fu2026revisitingonpolicydistillation,li2026rethinkingopd,zhao2026rosd,kim2026rebelliousstudent,zhang2026fullrolloutsopd}.
Privileged self-distillation has also been used for multiagent self-play without external data \citep{chen2026piplay}, while Skill-SD and SDAR instantiate a privileged teacher by conditioning on retrieved skills and distilling from the skill-conditioned prompt into the no-skill prompt \citep{wang2026skillsdskillconditionedselfdistillation,lu2026selfdistilledagenticreinforcementlearning}.
However, this fixed-teacher assumption fails when skill retrieval or utilization is locally harmful.
\textsc{UCOB} instead treats the skill-conditioned and no-skill prompts as two on-policy views of the same model and selects the teacher direction from same-anchor-state returns.
It complements prior OPD-style filtering by making teacher selection itself credit-aware, internalizing useful skills and correcting misleading skill context within one bidirectional objective.


\section{Conclusion}

We studied how language agents can learn from reusable but imperfect retrieved skills.
Our observations show that skill-conditioned prompts are state-dependent teachers rather than universally reliable privileged views, so on-policy skill rollouts still require local credit assignment.
We introduced \textsc{UCOB}, a unified framework for learning to utilize and evolve agentic skills via credit-aware on-policy bidirectional self-distillation.
CBSD compares skill-conditioned and no-skill views under the same task and anchor state, selects the higher-return view as the local teacher, and thereby internalizes useful skill-induced behavior while correcting misleading skill context.
Together with dual-granularity skill memory evolution, utility-aware retrieval, and reflection self-training, \textsc{UCOB} achieves consistent gains on \textsc{ALFWorld}, \textsc{WebShop}, and \textsc{Search-QA}; analyses further show improved context views, useful evolving memories, robust adaptation across sequential environments, and localized training overhead.
Overall, this work frames skill utilization as a credit-assignment problem: agents should learn not only which skills to retrieve, but also when to trust, correct, and internalize them during on-policy training.

\bibliography{iclr2026_conference}
\bibliographystyle{iclr2026_conference}

\appendix

\section{Detailed Protocols for the Observation Study}
\label{app:obs-settings}

We give the full experimental settings and protocols behind the observation study in Section~\ref{sec:observations}, whose schematic is shown in Figure~\ref{fig:obs-settings}.
We use $P_+$ and $P_0$ to denote the skill-conditioned and no-skill prompts, respectively.
The first two protocols instantiate fixed-direction on-policy self-distillation (OPSD), where the skill-conditioned view supervises the no-skill view through $\mathcal L_{\mathrm{OPSD}}$~\citep{lu2026selfdistilledagenticreinforcementlearning,zhao2026selfdistilledreasoner}.
The third protocol is a diagnostic: it does not fix a teacher, but instead groups matched anchor states and compares return-to-go under the two prompt views.
Throughout this appendix, $\mathcal L_{\mathrm{RL}}$ denotes the online RL loss used for environment rollouts.

\paragraph{Fixed teacher without skill rollouts.}
This protocol matches the SDAR setting analyzed above~\citep{lu2026selfdistilledagenticreinforcementlearning}.
Only the no-skill prompt $P_0$ is executed in the environment, so rollout records are generated and optimized under $\mathcal L_{\mathrm{RL}}$ from the no-skill view.
The skill-conditioned prompt $P_+$ is evaluated only by a teacher forward pass on the same student-generated prefixes, providing the additional $\lambda\,\mathcal L_{\mathrm{OPSD}}$ target for $P_0$.
Thus, $P_+$ is treated as a privileged self-teacher without receiving rollout-based RL updates under its own context.
This setting tests whether a skill-conditioned teacher without skill rollouts can be both reliable and self-correcting, as shown in Figure~\ref{fig:sdar-dual-acc}.

\paragraph{Dual rollouts with fixed-direction OPSD.}
This protocol follows the dual-rollout training design of Skill-SD~\citep{wang2026skillsdskillconditionedselfdistillation}.
Each batch executes both prompt views: half of the trajectories act under $P_+$ and half act under $P_0$, so both views receive $\mathcal L_{\mathrm{RL}}$ from environment interaction.
However, the distillation direction is still fixed: $P_+$ supervises $P_0$, and only the no-skill branch receives the additional $\lambda\,\mathcal L_{\mathrm{OPSD}}$ term.
This isolates whether making the skill-conditioned teacher on-policy is sufficient.
Figure~\ref{fig:dsdar-dual-acc} shows that dual rollouts reduce exposure mismatch, but do not remove teacher ambiguity because the fixed teacher can still be locally weaker than the no-skill view.

\paragraph{Anchor-state value-gap diagnostic.}
This protocol removes the fixed teacher direction and serves as the local return-gap diagnostic.
Using the anchor-state grouping mechanism of GiGPO~\citep{feng2025groupingrouppolicyoptimizationllm}, we group rollout records that share the same task $u$ and anchor state $\tilde{s}$; each record retains its acting branch $c_{i,t}\in\{+,0\}$ and return-to-go $G_{i,t}=\sum_{\ell\ge0}\gamma^\ell r_{i,t+\ell}$.
Within each group, we average return-to-go by branch to obtain $G_{+}(\tilde{s})$ and $G_{0}(\tilde{s})$, and use the local value gap $\Delta(\tilde{s})=G_{+}(\tilde{s})-G_{0}(\tilde{s})$ to diagnose which view is more reliable.
This comparison is well-posed because both views are scored under the same task and canonical state, isolating the local effect of skill conditioning from trajectory-level differences.
Positive gaps indicate states where the skill-conditioned view is the better local teacher, whereas negative gaps indicate states where the no-skill view is more reliable.
As shown in Figure~\ref{fig:fig3-motivation}, these gaps are mixed-sign and can switch across steps, motivating per-state teacher selection rather than a fixed skill-to-no-skill direction.

\section{Full Training Algorithm}
\label{app:full-algorithm}

Algorithm~\ref{alg:ucob-full} summarizes the full \textsc{UCOB} training loop, matching the four stages in Figure~\ref{fig:ucob-main}.

\begin{figure}[!t]
\centering
\begin{ucobalgorithm}{\textsc{UCOB}: Credit-Aware On-Policy Bidirectional Self-Distillation}
\label{alg:ucob-full}
\footnotesize
\begin{algorithmic}[1]
\Require policy \(\pi_\theta\), old policy \(\pi_{\theta_{\mathrm{old}}}\), reference policy \(\pi_{\mathrm{ref}}\), task distribution \(\mathcal D\), dual-granularity skill memory \(\mathcal M=\{\mathcal M_{\mathrm{task}},\mathcal M_{\mathrm{state}}\}\), rollout group size \(N\), retrieval budgets \(K_p^{\mathrm{mem}}\), CBSD margin \(\epsilon_{\mathrm{CBSD}}\), loss weights \((\lambda_{\mathrm{CBSD}},\lambda_{\mathrm{refl}},\beta_{\mathrm{KL}})\)
\Ensure trained policy \(\pi_\theta\), task-level skill pool \(\mathcal M_{\mathrm{task}}\), and state-level skill pool \(\mathcal M_{\mathrm{state}}\)
\State Initialize or load \(\pi_\theta\), \(\pi_{\mathrm{ref}}\), \(\mathcal M_{\mathrm{task}}\), and \(\mathcal M_{\mathrm{state}}\)
\For{training step \(r=1,\ldots,T\)}
    \State Sample a batch of task groups \(\{u\}\sim\mathcal D\)
    \For{each task group \(u\)}
        \State \textcolor{sciblue}{\textbf{Step 1: skill retrieval and view construction.}}
        \For{each encountered state \(s_t\)}
            \State Retrieve task/state skills \(M_{\mathrm{task}}(s_t),M_{\mathrm{state}}(s_t)\) with Eqs.~(\plaineqref{eq:ucb-retrieval}-\plaineqref{eq:retrieved-memory})
            \State Build the skill-conditioned view \(P_+(s_t)\) by inserting retrieved skills, and the no-skill view \(P_0(s_t)\) without retrieved skills
        \EndFor
        \State \textcolor{sciblue}{\textbf{Step 2: mixed on-policy rollouts and anchor-state grouping.}}
        \State Roll out \(N/2\) trajectories under \(P_+\) and \(N/2\) trajectories under \(P_0\) with the current policy
        \State Store each decision record \(\rho_{i,t}\) with branch \(c_{i,t}\in\{+,0\}\), acting/opposite prompts, response \(y_{i,t}\), and return-to-go \(G_{i,t}\)
        \State Group records by same task and canonical anchor state into \(\mathcal C(u,\tilde{s})\) as in Eq.~(\plaineqref{eq:anchor-group})
    \EndFor
    \State \textcolor{scired}{\textbf{Step 3 (core): credit-aware bidirectional self-distillation.}}
    \State Initialize the accepted pair set \(\mathcal P_{\mathrm{CBSD}}\leftarrow\emptyset\)
    \For{each anchor-state group \(\mathcal C(u,\tilde{s})\)}
        \State Select the highest-return local record \(\rho^\star\), its branch \(c^\star\), and response \(y^\star\) by Eq.~(\plaineqref{eq:credited-reference})
        \For{each opposite-view record \(\rho^{\mathrm{opp}}\in\mathcal C(u,\tilde{s})\) with \(c_{\rho^{\mathrm{opp}}}=\bar c^\star\)}
            \State Compute \(\Delta_e=G_{\rho^\star}-G_{\rho^{\mathrm{opp}}}\)
            \If{\(\Delta_e\le\epsilon_{\mathrm{CBSD}}\)}
                \State Ignore this pair because the local return evidence is weak
            \ElsIf{\(c^\star=+\)}
                \State \textcolor{sciblue}{Internalize useful skill behavior:} add a pair \(e\) that distills \(P_+\rightarrow P_0\) with weight from \(\Delta_e\)
            \Else
                \State \textcolor{scired}{Correct misleading skill context:} add a pair \(e\) that distills \(P_0\rightarrow P_+\) with weight from \(\Delta_e\)
            \EndIf
        \EndFor
    \EndFor
    \State Compute token-level top-\(K\) targets, gap weights, confidence gates, and \(\mathcal L_{\mathrm{CBSD}}\) as in Eq.~(\plaineqref{eq:cbsd-objective})
    \State \textcolor{scigreen}{\textbf{Step 4: dynamic skill evolution and joint optimization.}}
    \State Write/refine task-level skills from rollout-group reflection, and state-level skills from high-vs-low anchor-state contrasts
    \State \textcolor{scigreen}{Maintain skill memory:} deduplicate, update utility by Eq.~(\plaineqref{eq:utility-update}), and prune weak/redundant skills
    \State Self-train the reflection writer with utility-derived advantages using Eq.~(\plaineqref{eq:reflection-loss})
    \State Update \(\pi_\theta\) with the joint objective in Eq.~(\plaineqref{eq:ucob-objective})
\EndFor
\State \Return \(\pi_\theta,\mathcal M_{\mathrm{task}},\mathcal M_{\mathrm{state}}\)
\end{algorithmic}
\end{ucobalgorithm}
\end{figure}

\section{Implementation Details}
\label{app:impl-details}

\paragraph{Optimization and rollout.}
For \textsc{ALFWorld} and \textsc{WebShop}, \textsc{UCOB} uses GiGPO-style group advantage estimation with rollout group size $8$, training batch size $16$, validation batch size $128$, and discount factor $\gamma=0.95$.
We train the actor with learning rate $10^{-6}$, PPO mini-batch size $64$, per-GPU micro-batch size $8$, low-variance KL coefficient $0.01$, entropy coefficient $0$, and invalid-action penalty coefficient $0.1$.
The prompt and response length limits are $8192$ and $512$ tokens, respectively.
Rollouts are generated with vLLM using top-$p=1.0$ and maximum model length $10240$; validation uses temperature $0.4$ and top-$p=1.0$.
The maximum environment horizon is $50$ for \textsc{ALFWorld} and $15$ for \textsc{WebShop}.
Main runs use $8$ GPUs, train for $150$ training steps, validate every $5$ steps, and keep at most $5$ checkpoints.

\paragraph{CBSD.}
CBSD is enabled on mixed skill/no-skill rollouts and uses the opposite context view as the distillation target selected by same-task, same-anchor-state returns.
We use turn-level, response-scope top-$K$ OPD with $K_{\mathrm{tok}}=32$, $\lambda_{\mathrm{CBSD}}=0.1$, return-gap margin $\epsilon_{\mathrm{CBSD}}=0.05$, gap temperature $\tau_{\mathrm{CBSD}}=0.2$, maximum gap weight $w_{\max}=2.0$, and at most two accepted teacher-target pairs per anchor-state group.
The confidence gate is enabled with coefficient $5.0$.
CBSD supervision is applied on the action span, with action-token weight $2.0$.

\paragraph{Dynamic memory and reflection.}
The dual-granularity skill memory maintains separate task-level and state-level pools and retrieves from both pools during training and evaluation.
For each prompt, we retrieve top-$3$ task skills and top-$3$ state skills after top-$10$ sentence-transformer candidate filtering, using state-aware task/state retrieval queries and UCB scoring with exploration scale $0.1$.
The relevance threshold is $0.4$, the context budget for retrieved skills is $4096$ tokens, and skill utilities are updated with the baseline-relative rule using EMA rate $\beta_U=0.2$ for both pools.
Reflection writes one task-level item per rollout group and one state-level item per qualifying anchor-state contrast, with minimum local gap $0.05$, maximum reflection response length $2048$, temperature $1.0$, and top-$p=1.0$.
Reflection self-training starts after a $30$-step warmup in the Qwen3-1.7B runs, samples at most $16$ task-level and $16$ state-level memories per step, keeps memories with absolute utility at least $0.02$, computes pool-normalized advantages with clipping range $[-2,2]$, and uses loss weight $\lambda_{\mathrm{refl}}=1.0$.

\paragraph{\textsc{Search-QA} runs.}
For \textsc{Search-QA}, the search-agent scripts use rollout group size $8$, training batch size $128$, validation batch size $512$, prompt and response limits $4096/512$, maximum search horizon $4$, and history length $4$.
The actor learning rate is $10^{-6}$ with warmup ratio $0.1$, PPO mini-batch size $256$, per-GPU micro-batch size $16$, KL coefficient $0.001$, and invalid-action penalty coefficient $0.01$.
Search rollouts call the local retrieval server and are trained for $150$ steps on $8$ GPUs, with validation at the final training step.

\section{Local Policy-Improvement View of CBSD}
\label{app:theory-cbsd}

We analyze CBSD as a local policy-improvement operator rather than a global convergence guarantee.
The full \textsc{UCOB} system is nonstationary because the policy, retrieval distribution, and skill memory co-evolve during training.
Accordingly, we fix one policy update, one task-anchor group, and the old on-policy continuation, and ask what the CBSD direction means locally.
The argument follows the performance-difference and trust-region view of policy improvement~\citep{kakade2002approximatelyoptimal,schulman2015trustregionpolicyoptimization}, and interprets KL distillation as a small mirror-descent or advantage-weighted supervised step~\citep{tomar2020mirrordescentpolicyoptimization,peng2019advantageweightedregression}.

Let $x=(u,\tilde{s})$ denote a same-task, same-anchor-state context.
For clarity, write $p_c(y\mid x)$ for the response-level distribution induced by branch $c\in\{+,0\}$ under the old policy, where $+$ denotes the skill-conditioned view and $0$ denotes the no-skill view.
After response $y$ is produced, the continuation follows the old policy.
Define
\begin{equation}
    Q^\pi(x,y)
    =
    \mathbb E\!\left[G_t\mid u,\operatorname{anchor}(s_t)=\tilde{s},\,y_t=y\right],
    \qquad
    \mu_c(x)=\mathbb E_{y\sim p_c(\cdot\mid x)} Q^\pi(x,y),
\end{equation}
and let $A^\pi(x,y)=Q^\pi(x,y)-V^\pi(x)$.
The branch-level value gap is $\Delta(x)=\mu_+(x)-\mu_0(x)$.
The implemented objective uses sample-level accepted gaps $\Delta_e=G_{\rho^\star}-G_{\rho^{\mathrm{opp}}}$ with weight $w_e=\operatorname{clip}(\Delta_e/\tau_{\mathrm{CBSD}},0,w_{\max})$.

We first isolate the same-anchor credit identity, then use it to prove the main proposition and connect the result to the implemented token-level objective.

\begin{proposition}[Same-anchor returns estimate local advantage differences]
\label{prop:same-anchor-advantage}
For a fixed task-anchor context $x=(u,\tilde{s})$ and fixed old-policy continuation,
\begin{equation}
    \Delta(x)
    =
    \mathbb E_{y\sim p_+(\cdot\mid x)} A^\pi(x,y)
    -
    \mathbb E_{y\sim p_0(\cdot\mid x)} A^\pi(x,y).
\end{equation}
Thus, within the same anchor state, the skill-minus-no-skill return gap is a local advantage-difference signal rather than a raw trajectory-level preference.
\end{proposition}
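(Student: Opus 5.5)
The identity reduces to linearity of expectation together with the fact that the baseline $V^\pi(x)$ depends only on the context $x=(u,\tilde{s})$, not on the branch $c$. First, substitute $Q^\pi(x,y)=A^\pi(x,y)+V^\pi(x)$ into the definition of $\mu_c(x)$ for each $c\in\{+,0\}$. Since $p_c(\cdot\mid x)$ is a probability distribution over responses and $V^\pi(x)$ is constant in $y$, this gives
\begin{equation*}
    \mu_c(x)=\mathbb E_{y\sim p_c(\cdot\mid x)}A^\pi(x,y)+V^\pi(x).
\end{equation*}
Subtracting the no-skill identity from the skill-conditioned one cancels $V^\pi(x)$ and yields the claimed expression for $\Delta(x)$.

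The step that needs care is checking that $V^\pi(x)$ and $Q^\pi(x,y)$ are branch-independent. That is the content of the clause ``fixed old-policy continuation''. Here I will use the definition of $Q^\pi$ given just above. It conditions only on the task $u$, the anchor $\operatorname{anchor}(s_t)=\tilde{s}$, and the emitted response $y_t=y$. After $y$ is emitted, the continuation follows the same old policy regardless of which prompt produced $y$.

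Hence $Q^\pi(x,y)$ is a single function of $(x,y)$, shared by both views. I take $V^\pi(x)$ to be any quantity depending only on $x$, for example the old-policy state value at the anchor. Its precise choice is immaterial because it cancels. I will note this explicitly so the identity holds for any branch-independent baseline, including the pooled group mean used by GiGPO.

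I will also add a remark on what the identity does not claim. It does not say the two views visit the anchor with equal frequency. The expectations are conditional on having reached $x$, which is exactly what same-anchor grouping in $\mathcal C(u,\tilde{s})$ implements. The anchor abstraction is assumed to make $Q^\pi$ well defined; that is, states sharing an anchor are treated as equivalent for the continuation. Finally, I will connect the identity to the estimator: branchwise averages of $G_{i,t}$ within $\mathcal C(u,\tilde{s})$ are unbiased for $\mu_c(x)$ conditional on the anchor, so their difference estimates $\Delta(x)$.
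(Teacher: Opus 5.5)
Your argument is correct and is the same as the paper's: both write $\mathbb E_{p_c}A^\pi(x,y)=\mu_c(x)-V^\pi(x)$ by linearity and cancel the branch-independent baseline $V^\pi(x)$ when subtracting the two branches. One caution about your closing estimator remark: branchwise averages of $G_{i,t}$ estimate $\mu_c(x)$ without bias only under the idealized shared old-policy continuation, because in UCOB's actual mixed rollouts each trajectory keeps its own prompt view after $y$, so that claim is an added modeling assumption rather than part of the identity.
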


\begin{proof}
By definition, $\mu_c(x)=\mathbb E_{y\sim p_c(\cdot\mid x)}Q^\pi(x,y)$ for each branch $c$.
Since the two branches are compared at the same $x$, the baseline $V^\pi(x)$ is shared:
\begin{align}
    \mathbb E_{p_+}A^\pi(x,y)-\mathbb E_{p_0}A^\pi(x,y)
    &=
    \left(\mathbb E_{p_+}Q^\pi(x,y)-V^\pi(x)\right)
    -
    \left(\mathbb E_{p_0}Q^\pi(x,y)-V^\pi(x)\right) \\
    &=
    \mu_+(x)-\mu_0(x)
    =
    \Delta(x).
\end{align}
Therefore, grouping by the same anchor state removes the state-value baseline and makes the return gap a local credit signal for choosing the teacher direction.
\end{proof}

\paragraph{Restatement of Proposition~\ref{prop:main-local-cbsd}.}
Let $h$ and $\ell$ be the higher- and lower-value branches at context $x$, with $\delta_x=\mu_h(x)-\mu_\ell(x)>0$.
For the idealized update $p_\ell^\eta=(1-\eta)p_\ell+\eta p_h$, the local advantage surrogate increases by $\eta\delta_x$.
If the induced full-policy update is trust-region controlled with state-distribution error bounded by $C\eta^2$, then
\begin{equation}
    J(\pi^\eta)-J(\pi)
    \ge
    \frac{d_\pi(x)}{1-\gamma}\eta\delta_x-C\eta^2.
\end{equation}

\paragraph{Proof of Proposition~\ref{prop:main-local-cbsd}.}
\begin{proof}
Let $h$ and $\ell$ denote the higher- and lower-value branches, so $\delta_x=\mu_h(x)-\mu_\ell(x)>0$.
For the idealized interpolation $p_\ell^\eta=(1-\eta)p_\ell+\eta p_h$, linearity and Proposition~\ref{prop:same-anchor-advantage} give
\begin{align}
    \mathbb E_{p_\ell^\eta}A^\pi(x,y)-\mathbb E_{p_\ell}A^\pi(x,y)
    &=
    \eta\left(
    \mathbb E_{p_h}A^\pi(x,y)-\mathbb E_{p_\ell}A^\pi(x,y)
    \right) \\
    &=
    \eta\left(\mu_h(x)-\mu_\ell(x)\right)
    =
    \eta\delta_x,
\end{align}
where the second equality uses the shared baseline in Proposition~\ref{prop:same-anchor-advantage}.
The performance-difference lemma expresses the global improvement as the discounted visitation-weighted advantage of the new policy under the old policy.
For a small trust-region update, replacing the new occupancy with the old occupancy introduces only a second-order error, yielding the stated lower bound.
\begin{equation}
    J(\pi^\eta)-J(\pi)
    \ge
    \frac{d_\pi(x)}{1-\gamma}\eta\delta_x-C\eta^2.
\end{equation}
When this lower bound is positive,
\begin{equation}
    J^\star-J(\pi^\eta)
    =
    J^\star-J(\pi)-\left(J(\pi^\eta)-J(\pi)\right)
    <
    J^\star-J(\pi),
\end{equation}
so the value gap to the optimal policy decreases.
\end{proof}

\paragraph{Connection to the implemented objective.}
The implemented token-level CBSD objective can be viewed as a KL-proximal stochastic approximation to the above interpolation.
For an accepted pair $e$ and token position $j$, the reference distribution $q_{e,j}$ is stop-gradient from the higher-return branch, while $p_{e,j}$ is the target-branch distribution.
Ignoring normalization,
\begin{equation}
    -\nabla_\theta\!\left(\chi_{e,j}\mathrm{KL}(q_{e,j}\|p_{e,j})\right)
    =
    \chi_{e,j}\,
    \mathbb E_{v\sim q_{e,j}}\!\left[\nabla_\theta\log p_{e,j}(v)\right],
    \qquad
    \chi_{e,j}=m_{e,j}w_e g_{e,j}\ge0.
\end{equation}
Thus, CBSD applies positive-weight imitation toward the locally higher-return branch, with the weight increasing in the accepted return gap.

\begin{proposition}[Bidirectionality avoids negative fixed-teacher updates]
\label{prop:bidirectionality}
If a fixed skill-to-no-skill rule is used, then for any context with $\Delta(x)<0$, the induced local surrogate change on the no-skill branch is negative:
\begin{equation}
    \mathbb E_{(1-\eta)p_0+\eta p_+}Q^\pi(x,y)
    -
    \mathbb E_{p_0}Q^\pi(x,y)
    =
    \eta\Delta(x)<0.
\end{equation}
CBSD reverses the direction in this case and instead obtains a positive local change of $\eta(-\Delta(x))$ on the skill-conditioned branch.
\end{proposition}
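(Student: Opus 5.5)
The argument is a direct linearity computation that reuses the same interpolation as in Proposition~\ref{prop:main-local-cbsd}, with the teacher direction fixed instead of credit-selected. I fix the task-anchor context $x=(u,\tilde{s})$ and the old-policy continuation, so that $Q^\pi(x,\cdot)$ is a fixed function of the response $y$. Under the fixed skill-to-no-skill rule, the idealized update on the no-skill branch is $p_0^\eta=(1-\eta)p_0+\eta p_+$. Since expectation is linear in the mixing measure,
\begin{equation}
    \mathbb E_{p_0^\eta}Q^\pi(x,y)-\mathbb E_{p_0}Q^\pi(x,y)
    =\eta\left(\mathbb E_{p_+}Q^\pi(x,y)-\mathbb E_{p_0}Q^\pi(x,y)\right)
    =\eta\left(\mu_+(x)-\mu_0(x)\right)=\eta\Delta(x),
\end{equation}
which is strictly negative whenever $\Delta(x)<0$ and $\eta>0$. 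By Proposition~\ref{prop:same-anchor-advantage}, the shared baseline $V^\pi(x)$ cancels, so the same value $\eta\Delta(x)$ is also the change in the local advantage surrogate. The fixed-teacher update is therefore locally harmful in the same surrogate used by Proposition~\ref{prop:main-local-cbsd}.

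For the second claim, I will note that when $\Delta(x)<0$ the higher-value branch is $h=0$ and the lower-value branch is $\ell=+$, with $\delta_x=\mu_0(x)-\mu_+(x)=-\Delta(x)>0$. CBSD's credited direction selection (Eq.~\ref{eq:credited-reference}, in its population idealization) routes supervision from $P_0$ to $P_+$. The corresponding update is $p_+^\eta=(1-\eta)p_+ +\eta p_0$. Applying Proposition~\ref{prop:main-local-cbsd}, or repeating the linearity computation above, gives a change of $\eta\delta_x=\eta(-\Delta(x))>0$ on the skill-conditioned branch.

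The calculation itself is trivial. The main point needing care is the claim that CBSD ``reverses the direction.'' The implemented rule selects the winner by sampled returns $G_\rho$ and applies a margin $\epsilon_{\mathrm{CBSD}}$, whereas the proposition is stated at the level of the branch means $\mu_c$. I will therefore state explicitly that the proof concerns the idealized rule choosing the branch with the larger $\mu_c(x)$, which agrees with the sample rule when the branch ordering is correctly estimated, as already assumed in Proposition~\ref{prop:main-local-cbsd}. Sample-level misordering is a separate estimation issue and lies outside this local statement.
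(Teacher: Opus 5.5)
Your proposal is correct and follows essentially the same route as the paper: a linearity-of-expectation computation that gives $\eta\Delta(x)$ for the fixed skill-to-no-skill interpolation $(1-\eta)p_0+\eta p_+$, and $\eta(-\Delta(x))$ for the reversed CBSD interpolation on the skill-conditioned branch. Your extra remarks are sensible clarifications that the paper leaves implicit: strictness needs $\eta>0$, and "reversal" refers to the idealized rule that selects the branch with the larger $\mu_c(x)$ rather than the sampled-return, margin-gated rule.
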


\begin{proof}
For a fixed skill-to-no-skill update, the target distribution becomes $(1-\eta)p_0+\eta p_+$.
By linearity of expectation, its local value change relative to $p_0$ is
\begin{equation}
    \eta\left(\mu_+(x)-\mu_0(x)\right)=\eta\Delta(x).
\end{equation}
When $\Delta(x)<0$, this is a negative-advantage update.
CBSD selects the no-skill branch as the teacher and updates the skill-conditioned branch toward $p_0$, giving
\begin{equation}
    \eta\left(\mu_0(x)-\mu_+(x)\right)=\eta(-\Delta(x))>0.
\end{equation}
The same argument shows that when $\Delta(x)>0$, the skill-conditioned branch is the improving teacher.
Therefore, bidirectional teacher selection is necessary to turn mixed-sign local gaps into signed policy-improvement directions.
\end{proof}

\begin{corollary}[Conditional contraction toward a local optimal policy]
\label{cor:optimal-contraction}
Let $\pi_x^\star$ be a local optimal response distribution at context $x$, and assume common support.
If the CBSD-selected higher-return branch is also closer to $\pi_x^\star$ than the lower-value branch by margin $\zeta_x>0$,
\begin{equation}
    D_{\mathrm{KL}}(\pi_x^\star\|p_h)
    \le
    D_{\mathrm{KL}}(\pi_x^\star\|p_\ell)-\zeta_x,
\end{equation}
then the idealized CBSD update satisfies
\begin{equation}
    D_{\mathrm{KL}}(\pi_x^\star\|p_\ell^\eta)
    \le
    D_{\mathrm{KL}}(\pi_x^\star\|p_\ell)-\eta\zeta_x.
\end{equation}
\end{corollary}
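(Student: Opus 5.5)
The plan is to use convexity of the KL divergence in its second argument. For fixed $\pi_x^\star$, the map $p\mapsto D_{\mathrm{KL}}(\pi_x^\star\|p)=\sum_y \pi_x^\star(y)\log\pi_x^\star(y)-\sum_y\pi_x^\star(y)\log p(y)$ is convex, because $-\log$ is convex and the cross-entropy term is a nonnegative combination of such functions. The common-support assumption ensures that every quantity involved is finite. Specifically, $p_\ell$, $p_h$, and hence $p_\ell^\eta$ all put positive mass wherever $\pi_x^\star$ does.

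\textbf{Step 1 (Jensen on the mixture).} For $p_\ell^\eta=(1-\eta)p_\ell+\eta p_h$ with $\eta\in[0,1]$, apply concavity of $\log$ pointwise:
\[
\log p_\ell^\eta(y)\ge(1-\eta)\log p_\ell(y)+\eta\log p_h(y).
\]
Multiply by $-\pi_x^\star(y)$ and sum over $y$. Then add the entropy term $\sum_y\pi_x^\star\log\pi_x^\star$ to both sides, which it enters with equal weight since $(1-\eta)+\eta=1$. This gives
\[
D_{\mathrm{KL}}(\pi_x^\star\|p_\ell^\eta)\le(1-\eta)D_{\mathrm{KL}}(\pi_x^\star\|p_\ell)+\eta D_{\mathrm{KL}}(\pi_x^\star\|p_h).
\]

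\textbf{Step 2 (insert the margin).} Substitute the hypothesis $D_{\mathrm{KL}}(\pi_x^\star\|p_h)\le D_{\mathrm{KL}}(\pi_x^\star\|p_\ell)-\zeta_x$ into the right-hand side. Since $\eta\ge0$, the inequality direction is preserved. The right-hand side becomes $D_{\mathrm{KL}}(\pi_x^\star\|p_\ell)-\eta\zeta_x$, which is the claimed bound.

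\textbf{Main obstacle.} The only delicate point is well-definedness rather than the inequality itself. If $\pi_x^\star$ charges a response on which $p_h$ or $p_\ell$ vanishes, the divergences are $+\infty$ and the subtraction of $\zeta_x$ is meaningless. I would therefore state explicitly that common support means $\operatorname{supp}\pi_x^\star\subseteq\operatorname{supp}p_h\cap\operatorname{supp}p_\ell$. This makes both divergences finite, and the pointwise Jensen step on $\operatorname{supp}\pi_x^\star$ is then legitimate.

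I would close with a remark that the result is conditional. The hypothesis that the higher-return branch is also KL-closer to $\pi_x^\star$ is not implied by Proposition~\ref{prop:main-local-cbsd}, and the corollary only transfers that assumed ordering to the interpolated update.
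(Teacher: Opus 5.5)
Your proposal is correct and follows the same route as the paper's proof: convexity of $D_{\mathrm{KL}}(\pi_x^\star\|\cdot)$ in its second argument bounds the mixture by $(1-\eta)D_{\mathrm{KL}}(\pi_x^\star\|p_\ell)+\eta D_{\mathrm{KL}}(\pi_x^\star\|p_h)$, and substituting the margin hypothesis then finishes the argument. Your pointwise Jensen derivation of that convexity and your finiteness remark under common support spell out what the paper takes as given; the finiteness remark is valid here because responses have bounded length, so the response space is finite.
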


\begin{proof}
The KL divergence $D_{\mathrm{KL}}(\pi_x^\star\|\cdot)$ is convex in its second argument.
Thus,
\begin{align}
    D_{\mathrm{KL}}(\pi_x^\star\|p_\ell^\eta)
    &=
    D_{\mathrm{KL}}\!\left(\pi_x^\star\|(1-\eta)p_\ell+\eta p_h\right) \\
    &\le
    (1-\eta)D_{\mathrm{KL}}(\pi_x^\star\|p_\ell)
    +
    \eta D_{\mathrm{KL}}(\pi_x^\star\|p_h) \\
    &\le
    D_{\mathrm{KL}}(\pi_x^\star\|p_\ell)-\eta\zeta_x.
\end{align}
This corollary is conditional: return gaps alone do not imply distributional closeness to $\pi_x^\star$.
It formalizes the sense in which CBSD can move the target view closer to a local optimal policy when the higher-return branch is also a better local reference.
\end{proof}

\end{document}